\newtheorem{thm}{Theorem}
\newtheorem{cor}{Corollary}
\theoremstyle{remark}
\algnewcommand\algorithmicinput{\textbf{Input:}}
\algnewcommand\Input{\item[\algorithmicinput]}
\algnewcommand\algorithmicoutput{\textbf{Output:}}
\algnewcommand\Output{\item[\algorithmicoutput]}
\newcommand{\Pxy}{P_{X,Y}}
\newcommand{\Pygx}{P_{Y|X}}
\newcommand{\Px}{P_{X}}
\newcommand{\Py}{P_{Y}}
\newcommand{\EE}[1]{\mathbb{E}\left[#1\right]}
\newcommand{\calX}{\mathcal{X}}
\newcommand{\calN}{\mathcal{N}}
\newcommand{\calY}{\mathcal{Y}}
\newcommand{\calS}{\mathcal{S}}
\newcommand{\calF}{\mathcal{F}}
\newcommand{\calG}{\mathcal{G}}
\newcommand{\calL}{\mathcal{L}}
\newcommand{\bff}{\mathbf{f}}
\newcommand{\bg}{\mathbf{g}}
\newcommand{\bSigma}{\bm{\Sigma}}
\newcommand{\bA}{\mathbf{A}}
\newcommand{\bB}{\mathbf{B}}
\newcommand{\bC}{\mathbf{C}}
\newcommand{\bG}{\mathbf{G}}
\newcommand{\bF}{\mathbf{F}}
\newcommand{\bI}{\mathbf{I}}
\newcommand{\bU}{\mathbf{U}}
\newcommand{\bV}{\mathbf{V}}
\newcommand{\bL}{\mathbf{L}}
\newcommand{\bY}{\mathbf{Y}}
\newcommand{\bZ}{\mathbf{Z}}
\newcommand{\bx}{\mathbf{x}}
\newcommand{\by}{\mathbf{y}}
\newcommand{\bX}{\mathbf{X}}
\renewcommand{\tilde}{\widetilde}
\newcommand{\Reals}{\mathbb{R}}
\newcommand{\mtf}{\mathbf{\tilde{f}}}
\newcommand{\mtg}{\mathbf{\tilde{g}}}
\newcommand\blfootnote[1]{%
  \begingroup
  \renewcommand\thefootnote{}\footnote{#1}%
  \addtocounter{footnote}{-1}%
  \endgroup
}
\begin{document}
\title{Generalizing Correspondence Analysis\\for Applications in Machine Learning}
\date{}
\author{
    Hsiang~Hsu\thanks{H. Hsu and F. P. Calmon are with the John A. Paulson School of Engineering and Applied Sciences, Harvard University, Cambridge, MA, 02138. E-mails: \texttt{hsianghsu@g.harvard.edu, flavio@seas.harvard.edu}.},
    Salman~Salamatian\thanks{S. Salamatian is with the Research Laboratory of Electronics, Massachusetts Institute of Technology, Cambridge, MA, 02139. E-mail: \texttt{salmansa@mit.edu}.},
    and~Flavio~du~Pin~Calmon${}^*$\blfootnote{This work was presented in part at the International Conference on Artificial Intelligence and Statistics (AISTATS) in 2019 \cite{hsu2019correspondence}. In this submitted manuscript, we expand the principal inertia component-based correspondence analysis to applications in visualizations of classification boundary and its training process, multi-view learning, and multi-modal learning, both in theory and with experiments.}
}




\maketitle

\begin{abstract}
Correspondence analysis (CA) is a multivariate statistical tool used to visualize and interpret data dependencies by finding maximally correlated embeddings of pairs of random variables. 
CA has found applications in fields ranging from epidemiology to social sciences; however, current methods do not scale to large, high-dimensional datasets.
In this paper, we provide a novel interpretation of CA in terms of an information-theoretic quantity called the principal inertia components. We show that estimating the principal inertia components, which consists in solving a functional optimization problem over the space of finite variance functions of two random variable, is equivalent to performing CA. We then leverage this insight to design novel algorithms to perform CA at an unprecedented scale.
Particularly, we demonstrate how the principal inertia components can be reliably approximated from data using deep neural networks. Finally, we show how these maximally correlated embeddings of pairs of random variables in CA further play a central role in several learning problems including visualization of classification boundary and training process, and underlying recent multi-view and multi-modal learning methods. 
\end{abstract}

\textbf{Keywords}: Correspondence analysis, principal inertia components, principal functions, maximal correlation, canonical correlation analysis, interpretability, visualization, multi-view learning, multi-modal learning.

\section{Introduction}\label{sec:introduction}
Correspondence Analysis (CA) is an exploratory multivariate statistical technique with a decades-long history in applied statistics \cite{benzecri1973correspondence, greenacre1984theory, lebart2013correspondence, greenacre2017correspondence}.
CA aims to produce interpretable, low-dimensional visualizations (often two-dimensional) that capture complex relationships in data with entangled and intricate dependencies, leading to its successful deployment in fields ranging from bioinformatics and epidemiology \cite{tekaia2016genome, busold2005integration, sourial2010correspondence} to social, environmental and market sciences \cite{carrington2005models, ter2004co, ormoli2015diversity, ferrari2016whole, hoare2019brand}. 

Despite being a versatile statistical technique, CA has been underutilized on the large datasets with high-dimensional entries found in the current machine learning landscape.
This is due, at least in part, to the fact that existing methods for performing CA are not scalable. Traditionally, CA uses as its main computational ingredient a singular value decomposition (SVD) of the normalized contingency table\footnote{A contingency table is the empirical estimate of the joint distribution or, in other words, the table with the relative frequency of each observation of two random variables. See  \eqref{eq:contengency_table} for a precise definition.} of i.i.d. realizations of two random variables.  
This contingency table-based approach (see Section~\ref{sec:ca} for details) for performing CA has three fundamental limitations:
\begin{enumerate}
    \item It is restricted to data drawn from \emph{discrete} distributions with finite support, since contingency tables for continuous variables will be highly dependent on a chosen quantization.
    \item Even when the underlying distribution of the data is discrete, reliably estimating the contingency table may be infeasible due to \emph{limited number of samples}. This inevitably hinges CA on the more (statistically) challenging problem of estimating a joint distribution.
    \item It is not feasible to build contingency tables for \emph{high-dimensional} data. For example, if a random variable $X \in \{0, 1\}^a$ and all outcomes have non-zero probability, then the contingency table has $2^a$ rows.
\end{enumerate}

To address these limitations of contingency table-based CA and in order to scale-up this technique to large datasets, we revisit the core mathematical problem behind CA. Given two variables $X$ and $Y$ and the $d$-dimensional embeddings 
\begin{eqnarray}
\label{eq:non-linear}
\begin{aligned}
    \mathbf{f}(X) &= \left[f_1(X),\dots,f_d(X) \right]^\top \in \Reals^d,\\
    \mathbf{g}(Y) &= \left[g_1(Y),\dots,g_d(Y) \right]^\top \in \Reals^d
\end{aligned}
\end{eqnarray}
of $X$ and $Y$, respectively, we argue that CA seeks to find $\bff(X)$ and $\bg(Y)$ which are \emph{maximally correlated}, i.e., $\max \EE{\mathbf{f}(X)^\top\mathbf{g}(Y)}$, such that $\EE{\bff(X)}$ and $\EE{\mathbf{g}(Y)}$ are orthonormal\footnote{$\EE{\bff(X)\bff(X)^\top}=\mathbf{I}_d$ and $\EE{\mathbf{g}(Y)\mathbf{g}(Y)^\top}=\mathbf{I}_d$, where $\mathbf{I}_d$ is the $d$-dimensional identity matrix.}.  
Maximally correlated embeddings of two variables $X$ and $Y$ are also sought by statistical methods such as Canonical Correlation Analysis (CCA) \cite{hotelling1936relations} and its kernel variants \cite{bach2002kernel, hardoon2004canonical}. In contrast, CA is not limited to linear or kernel embeddings, but produces general non-linear embeddings.
These embeddings also play a central role in a broad range of learning tasks, from exploratory visualization/interpretation \cite{greenacre1984theory, greenacre1987geometric, greenacre2017correspondence}, data clustering \cite{zhang2018generalized, chaudhuri2009multi}, to multi-view and multi-modal learning \cite{arora2013multi, hu2018sharable, li2018survey, wang2015deep, benton2017deep}.

In this paper, we generalize CA beyond the contingency table-based method by considering the embeddings $\{f_i, g_i \}_{i=1}^d$ in the Hilbert space of functions with finite-variance that are pairwise maximally correlated, i.e., 
\begin{eqnarray}
\label{eq:non-linear2}
\max\limits_{f_i, g_i} \sqrt{\EE{f_i(X) g_i(Y)}} \triangleq \lambda_i,\;\forall i = 1, \cdots, d.
\end{eqnarray} 
We bring to bear an information-theoretic tool called the \emph{Principal Inertia Components} (PICs) of a joint distribution $\Pxy$ \cite{greenacre1984theory, du2017principal, buja1990remarks, renyi1959measures, witsenhausen1975sequences, makur2016polynomial}, that connects analysis of functions in Hilbert spaces with the Mean-Square-Error (MSE) and the $\chi^2$ statistic (Section~\ref{sec:pics_principal functions}).
Here, the orthonormal set of low-dimensional embeddings $\{f_i, g_i \}_{i=1}^d$ are known as the principal functions \cite{du2017principal} and the correlations $\lambda_i$ are known as the PICs.
The PICs and the principal functions possess two important properties: (i) The embeddings $\{f_i, g_i \}_{i=1}^d$ are the (usually non-linear) maximally correlated functions of $X$ and $Y$ in a descending order of correlation, i.e., $1 \geq \lambda_1 \geq \cdots \geq \lambda_d \geq 0$, and (ii) $\{f_i, g_i \}_{i=1}^d$ and $\{\lambda_i \}_{i=1}^d$ reconstruct the joint distribution between $X$ and $Y$ (cf. Corollary~\ref{corr:pic} and Theorem~\ref{thm:reconstitution_formula} in Section~\ref{sec:lancaster-decomposition}).

We further demonstrate, both in theory and in practice, that the low-dimensional embeddings produced by CA with discrete variables are \emph{special cases} of the principal functions found in the theory of PICs (Section~\ref{sec:generalizing_CA}).
We leverage this connection to recast CA in terms of a functional optimization that seeks to produce maximally correlated embeddings, and also design a deep neural network to estimate the PICs and principal functions. 
This network, named the Principal Inertia Component Estimator (PICE), can significantly scale-up CA methods to data relevant to current machine learning problems (e.g., images and texts).
Interestingly, the PICE can accurately estimate the principal functions for both discrete and continuous (potentially high-dimensional) random variables in practice, and our experiments show that it correctly approximates theoretical PICs for synthetic data (Section~\ref{sec:pice}).

With the generalized CA at our disposal, we study several use cases of interest in statistics, data science, and machine learning.
First, in Section~\ref{sec:gca}, we tackle visualization and interpretation of large datasets using CA.
Second, in Section~\ref{sec:reconstruct_pics}, we introduce a method to perform CA on black-box classification models using PICs and demonstrate (i) how to visualize the classification boundary of unknown classifiers and identify ambiguous samples, (ii) how to visualize the training behaviors (e.g, which class is first learnt/hardest to learn) of a classification model.
Finally, in Section~\ref{sec:mv_mm}, we use the generalized CA to quantify the dimensionality of the common latent space shared by different views in multi-view learning \cite{zhang2018generalized, hu2018sharable, wang2015deep}, and to visualize and interpret the correlations between data from different domains (e.g., images and captions) in multi-modal learning \cite{li2018survey, ngiam2011multimodal, srivastava2012multimodal}.
All these use cases are illustrated trhough extensive experiments using real-world datasets including images (e.g., MNIST \cite{lecun1998gradient}, noisy MNIST \cite{wang2015deep}, and CIFAR-10 \cite{krizhevsky2009learning}), the Flickr-$30$k image caption dataset \cite{plummer2015flickr30k}, recipes \cite{kaggle_what_cooking}, and others \cite{asuncion2007uci, aaker1997dimensions}.

In the remainder of this section, we survey related work and define notations.
Mathematical proofs and experimental details are provided in the Appendices, and Section~\ref{sec:conclusion} concludes this paper.
Source code for reproducing our experimental results are available\footnote{In this pre-print version of the manuscript, we add hyperlinks to relevant content (e.g., code and videos) directly to the text for ease of access. These will be converted into references in an eventual final version.} at \url{https://github.com/HsiangHsu/CorrelatedEmbeddings}.

\subsection{Related Work}
Several statistical methods exist to produce correlated low-dimensional embeddings of two variables $X$ and $Y$. 
A widely used approach is CCA \cite{hotelling1936relations}, which seeks to find the best linear relationships between the random variables. 
Kernel CCA (KCCA) \cite{bach2002kernel, hardoon2004canonical} extends CCA by first projecting the variables onto a Reproducing Kernel Hilbert Space (RKHS).
The (bivariate) Alternating Conditional Expectations (ACE) algorithm only considers non-linear embeddings $f_1$ and $g_1$ that are maximally correlated whilst having zero mean and unit variance \cite{breiman1985estimating, buja1990remarks}. 
The embeddings are found by iteratively computing $\EE{g(Y)|X}$ and $\EE{f(X)|Y}$, and converge to $f_1$ and $g_1$.
More closely related to the task of determining the PICs is the Deep CCA method \cite{andrew2013deep}, where non-linear mappings of multi-view data are produced using neural networks. 
In fact, the objective function of DCCA in \cite[Eq. 1]{wang2015deep} is similar to the definition of the PICs.
However, the non-linear projections found by DCCA are not exactly the principal functions, and an exact connection between DCCA and CA or PICs is not stated in \cite{wang2015deep}.

Despite being widely used, these methods have several limitations.
In CCA/KCCA, the correlations are restricted to predefined linear embeddings or RKHS.
In the ACE algorithm, the correlations beyond $f_1$ and $g_1$ are neglected, and for large, high-dimensional datasets, iteratively computing conditional expectations is intractable due to high sample complexity \cite{huang2019sample}.
In DCCA, the embeddings $\bff$ and $\bg$ are hard to visualize and interpret.
In comparison, the PICs and principal functions not only generalize the aforementioned methods for producing maximally correlated embeddings, such as (K)CCA and the (bivariate) ACE algorithm to much larger datasets via deep learning models, the Hilbert space perspective presented here also equips DCCA with a greater theoretical significance beyond its current use in multi-view and multi-modal learning.

Under different guises, the PICs (i.e., finding the most correlated non-linear embeddings of two variables) date back to the seminal papers in the information theory and statistics literature by Hirschfeld \cite{hirschfeld1935connection}, Gebelein \cite{gebelein1941statistische}, R\'enyi \cite{renyi1959measures}, and others \cite{lancaster1958structure, hannan1961general, witsenhausen1975sequences}, long before the derivation of (K)CCA, DCCA, and the ACE algorithm.
PICs are a generalization of the (Hirschfeld-Gebelein-R\'enyi) maximal correlation; in fact, the first PIC $\lambda_1$ is identical to the maximal correlation \cite{buja1990remarks}.
This groundbreaking prior work characterized the optimal embeddings of pairs of random variables without restrictions to an RKHS or need of a specific  parametric form for the embeddings. However, until now, these powerful theoretical results have not inspired practical algorithms or use cases for estimating non-linear embeddings.
Recently, the theory behind PICs was revisited in \cite{makur2015bounds, huang2017information, du2017principal} for information-theoretic use cases, e.g., analyzing local approximations of $f$-divergences \cite{makur2015linear, csiszar2004information} and in privacy \cite{wang2017estimation}. 
Unlike prior efforts, the method that we propose here (namely the PICE) allows the PICs and principal functions to be computed \emph{in practice and at scale}, thus extending the reach of the existing theory on PICs.

PICs are the common thread underlying seemingly different statistical methods such as CA, the ACE algorithm, and DCCA \cite{andrew2013deep}. 
These three methods are theoretically equivalent in that they seek to produce a (partial) PIC characterization of a joint distribution $\Pxy$ from its samples $\left\{(x_k,y_k)\right\}_{k=1}^n$. 
The connection between the PICs and DCCA is addressed above; the connection between CA and the bivariate ACE algorithm was noted by \cite{buja1990remarks}, and the connection between the PICs and the CA will be studied in Section~\ref{sec:deep_ca}. 
Recently, the sum of the $k$-largest PICs, defined as the $k$-correlation in \cite{du2017principal}, was applied in feature extraction of multi-modal data using neural networks in \cite{wang2019efficient}.
Here, we illustrate multi-modal feature extraction for image tagging as one of the many use cases of the PICs in Section~\ref{sec:mv_mm}.

We compare the PICs with Principal Component Analysis (PCA) and its variants, e.g., the kernel PCA \cite{hoffmann2007kernel} and maximally correlated PCA \cite{feizi2017maximally}. 
Similar to CA, these methods aim at representing data in terms of orthogonal (uncorrelated) components. 
As such, they capture the structural relationship within a high dimensional random vector of features $X$. 
These methods are focused on a single variable (e.g., an image) and usually used in an unsupervised manner. 
Instead, CA finds orthogonal components of both $X$ and $Y$ \emph{jointly} (e.g., images and captions), with the resulting components being highly correlated. This is fundamentally different from performing PCA or kPCA on the joint pair $(X,Y)$, as evidenced\footnote{In a similar vein, applying PCA to two concatenated random variables $(X,Y)$ does not necessarily recover CCA.} by the derivations in Section \ref{sec:deep_ca}. 
Moreover, CA produces non-linear, highly correlated embeddings, unlike kPCA, which requires a kernel to be defined \textit{a priori}, and unlike the maximally correlated PCA \cite[Eq. 1.4]{feizi2017maximally}, where objective is to maximize the sum of eigenvalues of the covariance matrix of the nonlinear transformation of $X$ and $Y$ \emph{without} the orthonormal constraints.

\subsection{Notation}
Capital letters (e.g., $X$ and $Y$) stand for random variables, and calligraphic letters (e.g., $\calX$ and $\calF$) for sets.
For any real-valued $X$, we denote the $\ell_p$-norm of $X$ as $\|X\|_p = (\EE{X^p})^{1/p}$.
We denote the probability measure of $X\times Y$ by $\Pxy$, the conditional probability measure of $Y$ given $X$ by $\Pygx$, and the marginal probability measure of $X$ and $Y$ by $\Px$ and $\Py$ respectively.
$X \sim \Px$ represents the fact that $X$ is distributed according to $\Px$.
Lower-case letters (e.g., $x$ and $y$) represent a sample drawn from a probability distribution.
Bold capital letters (e.g., $\mathbf{X}$) and bold lower-case letters (e.g., $\mathbf{x}$) are used for matrices and vectors respectively.
The $(i,j)$ entry of a matrix $\mathbf{X}$ is given by $[\mathbf{X}]_{i,j}$. 
Finally, we use $\mathbf{I}_d$ and $\mathbf{1}_d$ for the identity matrix and the all-one vector of dimension $d$, $\mathsf{diag}(\mathbf{v})$ for the matrix with diagonal entries equal to a given $\mathbf{v} \in \Reals^d$, and $[d] = \{1, \cdots, d\}$.

\section{Correspondence Analysis beyond SVD}\label{sec:deep_ca}
In this section, we formally introduce CA, the PICs and the principal functions, and (most importantly) the generalization of CA using the PICs.
We aim at presenting a geometric characterization of the space of finite-variance embeddings in an intuitive way at the expense of some mathematical rigor, and demonstrate that the set of embeddings of the form $\mathbf{f}(X)$ and $\mathbf{g}(Y)$ in \eqref{eq:non-linear} is \emph{completely} characterized by the PICs.  
The theoretical background for the PICs and the underlying functional spaces is then used to extend CA beyond its conventional matrix factorization.
The results below hold under mild compactness assumptions of the probability distributions. A more thorough investigation of the PICs can be found in \cite{witsenhausen1975sequences, buja1990remarks, du2017principal, asoodeh2015maximal, makur2016polynomial}.

\subsection{Correspondence Analysis}\label{sec:ca}
Correspondence analysis considers two random variables $X$ and $Y$ with $|\calX|, |\calY| < \infty$, and pmf $P_{X, Y}$ (cf. \cite{greenacre1984theory} for a detailed overview). 
Given samples $\{x_k, y_k\}_{k=1}^n$ drawn independently from $P_{X, Y}$, a two-way \emph{contingency table} $\mathbf{P}_{X, Y}$ is defined as a matrix with $|\calX|$ rows and $|\calY|$ columns of normalized co-occurrence counts (i.e., the relative frequency of each outcome pair), given by
\begin{eqnarray}\label{eq:contengency_table}
[\mathbf{P}_{X,Y}]_{i,j}= \frac{\mbox{\# of observations } (x_k,y_k)=(i,j)}{n}.
\end{eqnarray}
Moreover, the marginals are defined as $\mathbf{p}_X \triangleq \mathbf{P}_{X, Y} \mathbf{1}_{|\mathcal{Y}|}$ and $\mathbf{p}_Y \triangleq \mathbf{P}_{X, Y}^T \mathbf{1}_{|\mathcal{X}|}$. 
Consider a matrix 
\begin{eqnarray}
\label{eq:Q}
\mathbf{Q}&\triangleq& \mathbf{D}_{X}^{-1/2}(\mathbf{P}_{X,Y}-\mathbf{p}_X\mathbf{p}_Y^T)\mathbf{D}_{Y}^{-1/2} = \bU \bSigma \bV^\intercal, \label{eq:svd_Q}
\end{eqnarray}
where $\mathbf{D}_{X} \triangleq \mathsf{diag}(\mathbf{p}_X)$ and $\mathbf{D}_{Y} \triangleq \mathsf{diag}(\mathbf{p}_Y)$, and $\bU \bSigma \bV^\intercal$ is the SVD of $\mathbf{Q}$. 
Let $d = \min\{ |\calX|, |\calY| \}-1$, and $\{\sigma_i\}_{i=1}^d$ be the corresponding singular values, then we have the following definitions used in CA \cite{greenacre1984theory}:
\begin{itemize}
    \item The orthogonal factors of $X$ are $\mathbf{L} \triangleq \mathbf{D}_{X}^{-1/2} \bU$.
    \item The orthogonal factors of $Y$ are $\mathbf{R} \triangleq \mathbf{D}_{Y}^{-1/2} \bV$.
    \item The factor scores are $\lambda_i = \sigma_i^2, 1 \leq i \leq d$.
    \item The factor score ratios are $\frac{\lambda_i}{\sum_{i=1}\lambda_i}, 1 \leq i \leq d$.
\end{itemize}
CA makes use of the orthogonal factors $\mathbf{L}$ and $\mathbf{R}$  to visualize the correspondence (i.e., dependencies) between $X$ and $Y$. In particular, the first two columns of $\mathbf{L}$ and $\mathbf{R}$  can be plotted on a two-dimensional plane, with $([\mathbf{L}]_{i, 1}, [\mathbf{L}]_{i, 2})$ displayed as a point, and similarly for $\mathbf{R}$.
This visualization is known as the \emph{factoring plane}, and is analogous to visualization techniques based on CCA or PCA, where data is projected onto a two-dimensional plane. 
The remaining planes can be produced by plotting the other columns of $\mathbf{L}$ and $\mathbf{R}$.
The factor score ratio quantifies the variance (``correspondence'') captured by each orthogonal factor, and is often shown along the axes in factoring planes.
CA decomposes the $\chi^2$ statistic associated with a two-way contingency table, as we shall soon see in Section~\ref{sec:pics_principal functions}. Later on (Theorem \ref{thm:CA_PIC}), we  demonstrate that the orthogonal factors are equivalent to the maximally-correlated non-linear embeddings given by the PICs in \eqref{eq:non-linear} and \eqref{eq:non-linear2}. 
Finally, CA is also related to two-way spectral clustering of an undirected bipartite graph where nodes correspond to outcomes of discrete random variable $X$ and $Y$ and  $[\mathbf{P}_{X, Y}(x,y)$ is the weight of the edge connecting node $x$ with node $y$ \cite{chung1997spectral, li2006relationships}.

\begin{figure}[t!]
\centering
\includegraphics[width=.6\textwidth]{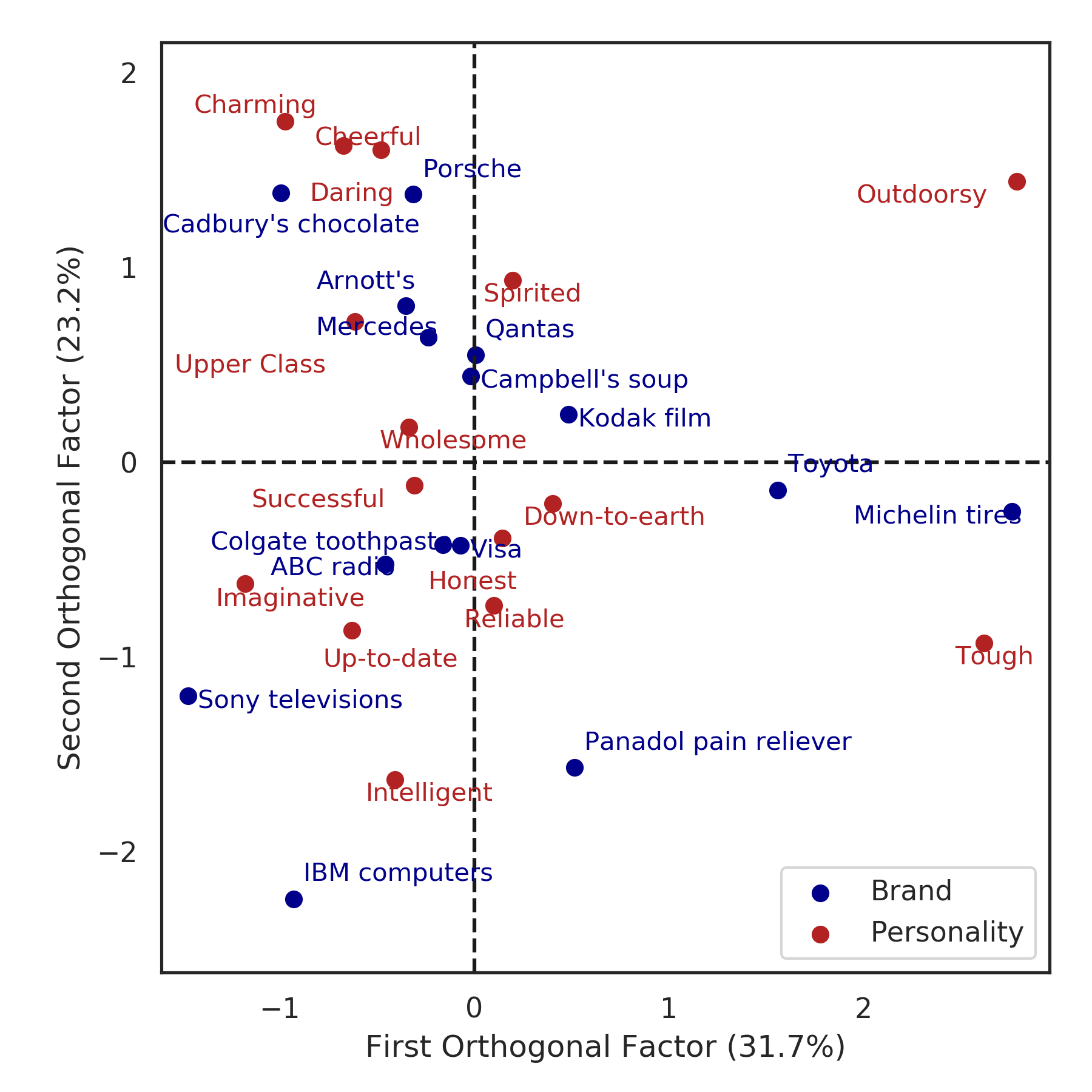}
\vspace{-.1in}
\caption{Correspondence analysis on parts of the brand-personality dataset \cite{aaker1997dimensions}.}
\label{fig:brand}
\end{figure}

\subsection{A Case Study on CA}
In order to better explain  CA, we provide a brief example next.
We use the brand-personality dataset \cite{aaker1997dimensions}, where individual subjects of a study were asked to assign different brands $X$ (e.g., IBM, Colgate) a set of personality traits $Y$ (e.g., intelligent, upper-class). 
We randomly collect the assignments for $15$ brands and $15$ personalities, and perform the correspondence analysis shown in Fig.~\ref{fig:brand}.
This visualization is much easier to interpret than, for example, a $15\times 15$ table with relative number of traits assigned per brand. In Fig.~\ref{fig:brand}, each blue dot corresponds to a two-dimensional embedding produced for a brand $X$, and each red dot correspond to an embedding produced for a personality trait $Y$. 
The position of each embedding is given by the first and second orthogonal factors of $X$ and $Y$, i.e., each blue and red points are $([\mathbf{L}]_{i, 1}, [\mathbf{L}]_{i, 2})$ and $([\mathbf{R}]_{i, 1}, [\mathbf{R}]_{i, 2})$ for $i \in [15]$ respectively. The factor score ratios for the first and second orthogonal factors are $31.7\%$ and $23.2\%$, meaning that the first two factors capture approximately half of the total correlation.
We can see that the brands ``Mercedes'' and ``Arnott's'' are upper-class; ``Porsche'' has high correspondence with upper-class, daring, spirited, and cheerful.
Moreover, since the ``successful'' and ``wholesome'' are close to the origin (i.e., point $(0, 0)$), they may not be good traits to differentiate between the brands.

After seeing a first example usage of CA, we provide next the definition of the PICs, which will enable the CA decomposition in (\ref{eq:Q}) to be performed for arbitrary random variables (under appropriate compactness assumptions), thus circumventing the need to perform a SVD of a contingency table. The results in the next section imply, for example, that the two-dimensional embedding displayed in Fig. \ref{fig:brand}---which was produced by computing the SVD \eqref{eq:Q} for the brand-personality dataset---corresponds exactly to the maximally correlated principal functions $\left\{(f_i,g_i)\right\}_{i=1,2}$ given in \eqref{eq:non-linear2}.

\subsection{Functional Spaces and the PICs}\label{sec:pics_principal functions}
For a random variable $X$ over the alphabet $\mathcal{X}$, we let $\calL_2(P_X)$ be the Hilbert Space of all functions from $\mathcal{X} \to \mathbb{R}$ with finite variance with respect to $P_X$, i.e.,
\begin{equation}
    \calL_2(\Px) \triangleq \left\{f:\calX\to \Reals \;\middle| \; \|f(X)\|_2<\infty \right\}.
\end{equation} 
For example, the output of a neuron in a feed-forward neural network is a point in $\calL_2(\Px)$.
This Hilbert space has an associated inner product given by $\langle f_1, f_2 \rangle = \mathbb{E}_X[ f_1(X)f_2(X)]$, $f_1, f_2 \in \mathcal{L}_2(P_X)$.
As customary, this inner product induces a distance\footnote{For convenience, and without loss of generality, we let the distance be defined as $d(f,g) = \langle f-g,f-g\rangle $, as opposed to $d(f,g) = \sqrt{\langle f-g,f-g\rangle }$.} between two functions $f_1, f_2 \in \calL_2(P_X)$, namely the MSE distance given by 
\begin{equation}
    d(f_1,f_2) = \mathbb{E}_X\left[ (f_1(X) - f_2(X))^2\right].
\end{equation}
For pairs of random variables $(X,Y) \sim P_{X,Y}$ taking values in $\mathcal{X} \times \mathcal{Y}$, we can similarly define the Hilbert Space $\calL_2(P_{X,Y})$. Note that $\calL_2(P_X)$ and $\calL_2(P_Y)$ are subspaces of $\calL_2(P_{X,Y})$ and, thus, one can construct the projection operator from  $\calL_2(P_X)$ to  $\calL_2(P_Y)$ given by 
\begin{eqnarray}\label{eq:compact}
    \Pi_{Y=y} [f] &\triangleq& \underset{g \in \calL_2(P_Y)}{\mathrm{argmin}} \; \mathbb{E}_{X,Y}\left[ ( f(X) - g(Y))^2\right| Y=y] = \mathbb{E}[f(X)|Y = y],
\end{eqnarray}
with adjoint operator $\Pi_{X=x}[g] = \mathbb{E}[g(Y)|X = x]$  defined for $g \in \calL_2(P_Y)$. The projection operator describes the closest function, in terms of mean-square-error, to a given function $f$ of the inputs.
Since $\calL_2(P_X)$ is a Hilbert space, there exists a basis (in fact, infinitely many) through which any function $f \in \calL_2(P_X)$ can be equivalently represented.
However, for applications such as the CA (see Section~\ref{sec:gca} to Section~\ref{sec:mv_mm} for details), it is of interest to find a basis for $\calL_2(P_X)$ which \emph{diagonalizes} the projection operator $\Pi_{Y=y}$.
This naturally leads to the following proposition.

\begin{thm}[\hspace{-.02em}{\cite{witsenhausen1975sequences}}]
\label{thm:defnPIC}
Without loss of generality, let $|\calY| \leq |\calX|$ and let $d \triangleq |\calY| -1$, or be infinity if both $|\calX|$ and $|\calY|$ are unbounded. There exists two sets of functions $\mathcal{F} = \{ f_0, f_1, \ldots,f_{d}\} \subseteq \calL_2(\Px)$ and $\mathcal{G} = \{g_0, g_1, \ldots g_{d}\}\subseteq\calL_2(\Py)$, and a set $\calS = \{1,\lambda_1,\ldots, \lambda_{d}\}$ such that:
\begin{itemize}
    \item \textbf{Orthornormality}: $f_0(X)$ and $g_0(Y)$ are constant function almost surely,  $\mathbb{E}[f_i(X) \cdot f_j(X)] = \delta_{i,j}$ and $\mathbb{E}[g_i(Y) \cdot g_j(Y)] = \delta_{i,j}$, for all $0 \leq i, j \leq d$.
    \item \textbf{Diagonalization}: $\mathbb{E}[f_i(X)|Y=y] = \sqrt{\lambda_i}g_i(y)$, and $\mathbb{E}[g_i(Y)|X=x] = \sqrt{\lambda_i} f_i(x)$ for all $1 \leq i \leq d$.
    \item \textbf{Basis}: Any function $g \in \calL_2(P_Y)$ can be represented as a linear combination $g(y) = \sum_{i = 0}^{d} \beta_i g_i(y)$. Similarly, any function $f \in \calL_2(P_X)$ can be represented as a linear combination $f(x) = f^{\perp}(x) + \sum_{i = 0}^{d} \alpha_i f_i(x)$, where $f^{\perp}$ is orthogonal to all $f_i$ for all $i = 0,1,\ldots,d$.
\end{itemize}
\end{thm}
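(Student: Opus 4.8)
The plan is to deduce the three bullets from the spectral theorem applied to a single self-adjoint operator built out of the conditional-expectation maps in \eqref{eq:compact}. Define the linear operator $T\colon \calL_2(\Py)\to\calL_2(\Px)$ by $T[g] = \Pi_{X=x}[g] = \mathbb{E}[g(Y)\mid X=x]$, with adjoint $T^\ast[f] = \Pi_{Y=y}[f] = \mathbb{E}[f(X)\mid Y=y]$; Jensen's inequality makes both contractions on the respective Hilbert spaces, and under the stated compactness assumptions on $\Pxy$ the positive semidefinite operator $S\triangleq T^\ast T$ on $\calL_2(\Py)$ is compact and self-adjoint (in the finite-alphabet case $S$ is just a finite positive semidefinite matrix, so this is automatic). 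First I would invoke the spectral theorem to obtain an orthonormal basis $\{g_i\}_{i\ge 0}$ of $\calL_2(\Py)$ consisting of eigenfunctions of $S$, with eigenvalues $\lambda_i = \|T[g_i]\|_2^2\ge 0$ listed in nonincreasing order. Since $T$ is a contraction, $\|S\| = \|T\|^2 \le 1$; since $T[\mathbf{1}] = \mathbf{1}$ and $T^\ast[\mathbf{1}] = \mathbf{1}$, the constant function is an eigenfunction of $S$ with eigenvalue $1$, hence $\lambda_0 = 1$ is the top eigenvalue, realized by $g_0 = \mathbf{1}$.

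Next I would set, for each index $i$ with $\lambda_i > 0$, $f_i \triangleq \lambda_i^{-1/2}\,T[g_i]\in\calL_2(\Px)$, so in particular $f_0 = T[\mathbf{1}] = \mathbf{1}$ is constant. The first diagonalization identity is then immediate, $\mathbb{E}[g_i(Y)\mid X=x] = T[g_i] = \sqrt{\lambda_i}\,f_i(x)$, and the dual one follows from $g_i$ being an eigenfunction, $\mathbb{E}[f_i(X)\mid Y=y] = \lambda_i^{-1/2}\,T^\ast T[g_i] = \lambda_i^{-1/2}\lambda_i\,g_i(y) = \sqrt{\lambda_i}\,g_i(y)$. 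Orthonormality of $\{f_i\}$ is a one-line computation, $\langle f_i,f_j\rangle = (\lambda_i\lambda_j)^{-1/2}\langle T[g_i],T[g_j]\rangle = (\lambda_i\lambda_j)^{-1/2}\langle S[g_i],g_j\rangle = (\lambda_i\lambda_j)^{-1/2}\lambda_j\,\delta_{i,j} = \delta_{i,j}$, which together with the orthonormality of $\{g_i\}$ gives the first bullet, and the two conditional-expectation identities give the second.

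For the third bullet, completeness of $\{g_i\}$ in $\calL_2(\Py)$ is exactly the content of the spectral theorem, so $g = \sum_{i\ge 0}\langle g,g_i\rangle\,g_i$. For $\calL_2(\Px)$ I would write any $f\in\calL_2(\Px)$ as $f = f^\perp + \sum_{i}\langle f,f_i\rangle\,f_i$ with $f^\perp$ the component orthogonal to $\overline{\mathrm{span}}\{f_i\}$; identifying $\overline{\mathrm{span}}\{f_i\}$ with the closure of the range of $T$ shows $f^\perp\in\ker T^\ast$, i.e.\ $\mathbb{E}[f^\perp(X)\mid Y=y]=0$, which is the stated orthogonality to every $f_i$. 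For the index count, the number of strictly positive eigenvalues of $S$ is at most $\dim\calL_2(\Py)=|\calY|$ in the finite case (and in general they accumulate only at $0$ by compactness); removing the trivial $\lambda_0=1$ leaves at most $d=|\calY|-1$ nontrivial PICs, and the symmetric roles of $X$ and $Y$, together with $|\calY|\le|\calX|$, justify working from the $Y$-side without loss.

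The main obstacle is the functional-analytic input in the unbounded-alphabet case: one must check that the ``mild compactness assumptions'' on $\Pxy$ genuinely force $S = T^\ast T$ to be compact, so that the spectral theorem yields a discrete spectrum with $0$ as the only possible accumulation point and a genuine orthonormal eigenbasis. Once that is granted, everything else is bookkeeping --- essentially the SVD of $T$, or equivalently the SVD of the matrix with entries $\Pxy(x,y)/\sqrt{\Px(x)\Py(y)}$ in the finite case. The remaining points to handle with a little care are the identification $\overline{\mathrm{span}}\{f_i\} = \overline{\mathrm{range}}(T)$ underlying the $f^\perp$ claim, and the degenerate case $\lambda_i = 0$, where $f_i$ is not defined by the formula above; there one simply completes $\{f_i:\lambda_i>0\}$ to an orthonormal basis of $\calL_2(\Px)$ arbitrarily, which affects none of the three properties.
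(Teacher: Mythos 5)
Your proof is correct and follows essentially the same route the paper relies on (it cites \cite{witsenhausen1975sequences} rather than reproving the result): a singular-value/spectral decomposition of the conditional-expectation operator $\Pi_{Y=y}$ of \eqref{eq:compact}, which is exactly the Hilbert--Schmidt argument the paper invokes again below \eqref{opti2}. Your handling of the edge cases (the constant eigenfunction at eigenvalue $1$, the $\lambda_i=0$ completion lying in $\ker T^\ast$, and $f^{\perp}\in\ker T^\ast$) is sound and needs no changes.
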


Borrowing the terminology from CA \cite{greenacre1984theory}, the functions within the sets $\mathcal{F}$ and $\mathcal{G}$ are defined here as the \emph{principal functions} of $P_{X,Y}$, and the elements of $\calS$ as the \emph{Principal Inertia Components}\footnote{The PICs can be intuitively understood as a measure of inertia of the posterior distribution vectors on the probability simplex, and hence the name, see, e.g., \cite{du2017principal}.} (PICs) of $P_{X,Y}$. 
Observe that $0\leq \lambda_i\leq 1$ and, without loss of generality, we let $1 \geq \lambda_1 \geq \lambda_2 \geq \ldots \geq \lambda_{d}$. 
We denote $f_i$ and $g_i$ the $i^\text{th}$ principal functions, and $\lambda_i$ the $i^\text{th}$ PIC; moreover, the $0^\text{th}$ principal functions are the constant $1$ (i.e., $f_0(x) = g_0(y) = 1$ almost surely, for all $x \in \calX$ and $y \in \calY$), and the resulting $0^\text{th}$ PIC $\lambda_0$ always has value $1$.

The principal functions constitute a particularly useful set of non-linear and orthornormal embedding of $X$ that can be reliably reconstructed from $Y$ (and vice-versa). 
The decomposition in Theorem~\ref{thm:defnPIC} allows  minimum-MSE (MMSE) estimators to be easily cast in terms of the principal functions of $X$. Consider the problem of estimating an arbitrary function $g$ of the labels. Since $\mathcal{G}$ forms a basis, $g$ can be written as $\sum_{i=0}^{d} \beta_i g_i(y)$, and thus, the best estimator of $g$ from the features $X$ is given by \cite{du2017principal}
\begin{equation}
\label{eq:mmse}
\underset{f\in \calL_2(\Px)}{\mathrm{argmin}}\; \EE{\left(f(X)-g(Y)\right)^2} = \sum_{i=0}^{d} \beta_i \sqrt{\lambda_i} f_i(x). 
\end{equation}

The $i^\text{th}$ principal function $f_i$ and $g_i$, as well as the PIC $\lambda_i$, have an equivalent recursive characterization:
\begin{cor}[\hspace{-.02em}{\cite{du2017principal, witsenhausen1975sequences}}]
\label{corr:pic}
Given $X$ and $Y$, assuming $f_0(x) = g_0(y) = 1, \forall x \in \calX, y \in \calY$, and let $d = \min\{|\calX|, |\calY|\}-1$, the PIC $\lambda_i$ is defined as
\begin{eqnarray}
\lambda_i(X; Y) =\hspace{-1em} &\max\limits_{\substack{f_i \in \calL_2(\Px)\\ g_i \in \calL_2(\Py)}}& \mathbb{E}[f_i(X)g_i(Y)]^2,\; 1 \leq i \leq d\\
&\text{subject to}& \mathbb{E}[f_i(X)f_j(X)] = 0, 0 \leq j  \leq i-1, \nonumber\\
& & \mathbb{E}[g_i(Y)g_j(Y)] = 0, 0 \leq j  \leq i-1. \nonumber
\end{eqnarray}
\end{cor}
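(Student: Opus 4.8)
The plan is to recognize this optimization as the Courant--Fischer (min--max) characterization of the singular values of the conditional-expectation operator, and to reduce it to a diagonal quadratic program by expanding $f_i$ and $g_i$ in the bases supplied by Theorem~\ref{thm:defnPIC}. One caveat first: the objective only makes sense under the implicit normalization $\mathbb{E}[f_i(X)^2]=\mathbb{E}[g_i(Y)^2]=1$, since otherwise scaling $f_i$ makes it unbounded; equivalently one maximizes the squared correlation coefficient $\mathbb{E}[f_i(X)g_i(Y)]^2/\big(\mathbb{E}[f_i(X)^2]\,\mathbb{E}[g_i(Y)^2]\big)$, which is the reading I would work with.

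First I would rewrite the bilinear objective using the conditional-expectation (projection) operator $f\mapsto\mathbb{E}[f(X)\mid Y]$ and its adjoint $g\mapsto\mathbb{E}[g(Y)\mid X]$ introduced above. By the tower property, $\mathbb{E}[f(X)g(Y)]=\mathbb{E}\big[f(X)\,\mathbb{E}[g(Y)\mid X]\big]=\langle f,\ \mathbb{E}[g(Y)\mid X]\rangle_{\calL_2(\Px)}$. Next I would invoke the \emph{Basis} property of Theorem~\ref{thm:defnPIC}: any $g$ admissible for the $i$-th problem (unit norm, orthogonal to $g_0,\dots,g_{i-1}$) can be written $g=\sum_{j\geq i}\beta_j g_j$ with $\sum_{j\geq i}\beta_j^2=1$, and any admissible $f$ as $f=f^{\perp}+\sum_{j\geq i}\alpha_j f_j$ with $\|f^{\perp}\|_2^2+\sum_{j\geq i}\alpha_j^2=1$ and $f^{\perp}$ orthogonal to every $f_k$. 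By the \emph{Diagonalization} property, $\mathbb{E}[g(Y)\mid X]=\sum_{j\geq i}\beta_j\sqrt{\lambda_j}\,f_j$, which lies in $\mathrm{span}\{f_k\}$ and is hence orthogonal to $f^{\perp}$; therefore $\mathbb{E}[f(X)g(Y)]=\sum_{j\geq i}\alpha_j\beta_j\sqrt{\lambda_j}$.

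It then remains to maximize $\big(\sum_{j\geq i}\alpha_j\beta_j\sqrt{\lambda_j}\big)^2$ over $\sum_{j\geq i}\alpha_j^2\leq 1$ and $\sum_{j\geq i}\beta_j^2\leq 1$. Using the ordering $\lambda_j\leq\lambda_i$ for $j\geq i$, the triangle inequality, and Cauchy--Schwarz,
\[
\Big|\sum_{j\geq i}\alpha_j\beta_j\sqrt{\lambda_j}\Big|\;\leq\;\sqrt{\lambda_i}\sum_{j\geq i}|\alpha_j|\,|\beta_j|\;\leq\;\sqrt{\lambda_i}\Big(\sum_{j\geq i}\alpha_j^2\Big)^{1/2}\Big(\sum_{j\geq i}\beta_j^2\Big)^{1/2}\;\leq\;\sqrt{\lambda_i},
\]
so the objective is at most $\lambda_i$. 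Conversely, $f=f_i$ and $g=g_i$ are admissible by \emph{Orthonormality} and give $\mathbb{E}[f_i(X)g_i(Y)]=\sqrt{\lambda_i}\,\mathbb{E}[f_i(X)^2]=\sqrt{\lambda_i}$, attaining $\lambda_i$. This proves the identity and identifies the $i$-th principal functions as maximizers.

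The step I expect to be the main obstacle is the infinite-dimensional case ($|\calX|,|\calY|$ unbounded): one must justify $\calL_2$-convergence of the above expansions, completeness of the basis of Theorem~\ref{thm:defnPIC}, and --- for a genuine maximum rather than a supremum over a non-compact ball --- compactness (e.g., Hilbert--Schmidt-ness) of the operator $g\mapsto\mathbb{E}[g(Y)\mid X]$ under the paper's standing compactness assumptions, which yields a discrete spectrum $\sqrt{\lambda_1}\geq\sqrt{\lambda_2}\geq\cdots\to 0$. Since Theorem~\ref{thm:defnPIC} already exhibits the optimizers, attainment is in fact immediate, and the only care needed in the upper bound is to truncate to finitely many coordinates and pass to the limit. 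Alternatively, if brevity is preferred, I would present the argument as an induction on $i$: the base case $i=1$ is the maximal-correlation characterization, and the inductive step restricts this operator to $(\mathrm{span}\{f_0,\dots,f_{i-1}\})^{\perp}\times(\mathrm{span}\{g_0,\dots,g_{i-1}\})^{\perp}$, on which the leading singular value is $\sqrt{\lambda_i}$ by the singular value decomposition structure of Theorem~\ref{thm:defnPIC}.
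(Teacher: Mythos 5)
Your argument is correct. The paper does not actually prove this corollary --- it is stated as a known result and attributed to \cite{du2017principal, witsenhausen1975sequences} --- so there is no in-paper proof to compare against; your derivation (expand admissible $f$ and $g$ in the bases of Theorem~\ref{thm:defnPIC}, use the diagonalization property to reduce the bilinear form to $\sum_{j\geq i}\alpha_j\beta_j\sqrt{\lambda_j}$, bound by Cauchy--Schwarz and the ordering of the $\lambda_j$, and exhibit $(f_i,g_i)$ as maximizers) is the standard and correct one. Your observation that the statement is only well-posed under the implicit normalization $\mathbb{E}[f_i(X)^2]=\mathbb{E}[g_i(Y)^2]=1$ is a genuine and worthwhile catch, as is your remark that attainment (rather than a mere supremum) in the unbounded-alphabet case rests on the compactness of the conditional-expectation operator that the paper assumes.
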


Note that the procedure in Corollary~\ref{corr:pic} is similar to that of PCA, which recursively determines the orthogonal directions that preserve variance. The PICs, in turns, determine the orthonormal functions that are maximally correlated in the Hilbert space. The principal functions $f$ and $g$ can be viewed as non-linear mappings that embed $X$ and $Y$ into a common space, in which $f(X)$ and $g(Y)$ can be viewed as embeddings.

The PICs are also connected to other information-theoretic measures.
For instance, the sum of the PICs $\sum_{i=1}^d \lambda_i$ is equal to the $\chi^2$-divergence $\chi^2(X; Y)$ between $X$ and $Y$, i.e., $\chi^2(X; Y) = \sum_{i=1}^d \lambda_i$ \cite{du2017principal}.
In this sense, the maximal correlation is a special case of the $\chi^2$ divergence when $d = 1$, and provides local approximations of $f$-divergence \cite{makur2015linear, csiszar2004information}.

\subsection{The Lancaster Decomposition}\label{sec:lancaster-decomposition}
As illustrated in \eqref{eq:mmse}, the principal functions precisely characterize the MSE-performance of estimating a function of $X$ from an observation $Y$ (and vice-versa) \cite{du2017principal}. 
In fact, the PICs and principal functions can be used to reconstruct the joint distribution entirely, as stated in the following important theorem which introduces the \emph{Lancaster decomposition}\footnote{This  decomposition has also appeared in the CA literature \cite{greenacre1984theory, buja1990remarks} under the name \emph{reconstitution formula}.}. This decomposition is key to connect the PICs with CA.

\begin{thm}[Lancaster Decomposition \cite{lancaster1958structure}]
\label{thm:reconstitution_formula}
Given the PICs $\{\lambda_i\}_{i=1}^d$ and the principal functions $\{f_i, g_i\}_{i=1}^d$ of the joint distribution $P_{X, Y}$ with regard to a common measure, we have
\begin{equation}\label{eq:reconstitution_formula}
\begin{aligned}
\frac{\Pxy(x, y)}{\Px(x)\Py(y)} &= \sum_{(f_i,g_i)\in \calF\times\calG} \sqrt{\lambda_i} f_i(x) g_i(y) = 1 + \sum_{i=1}^d \sqrt{\lambda_i} f_i(x) g_i(y).
\end{aligned}
\end{equation}
\end{thm}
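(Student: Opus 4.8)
The plan is to expand the density ratio $(x,y)\mapsto \Pxy(x,y)/(\Px(x)\Py(y))$ in the orthonormal basis of principal functions furnished by Theorem~\ref{thm:defnPIC}, and then read off the coefficients from the diagonalization property. Fix $x\in\calX$ and consider the function of $y$ alone given by $h_x(y)\triangleq \Pxy(x,y)/(\Px(x)\Py(y))$. Under the standing compactness assumptions on $\Pxy$ (equivalently, finiteness of $\chi^2(X;Y)$), $h_x\in\calL_2(\Py)$ for $\Px$-almost every $x$. Since, by hypothesis, $|\calY|\le|\calX|$, the \textbf{Basis} part of Theorem~\ref{thm:defnPIC} says that $\{g_i\}_{i=0}^d$ is \emph{complete} in $\calL_2(\Py)$ (there is no $g^\perp$ term, in contrast to the $f$-side), so
\begin{equation*}
h_x(y)=\sum_{i=0}^d \beta_i(x)\,g_i(y),\qquad \beta_i(x)=\langle h_x,g_i\rangle_{\calL_2(\Py)}=\EE{h_x(Y)\,g_i(Y)}.
\end{equation*}
Expanding on the $Y$-side (rather than the $X$-side) is what makes this step clean, precisely because the $g_i$ span all of $\calL_2(\Py)$ while the $f_i$ need not span $\calL_2(\Px)$.

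Next I would identify the coefficients $\beta_i(x)$. The key observation is that $y\mapsto h_x(y)$ is exactly the Radon--Nikodym derivative of the conditional law $\Pygx(\cdot\mid x)$ with respect to the marginal $\Py$. Hence
\begin{equation*}
\beta_i(x)=\EE{\frac{\Pxy(x,Y)}{\Px(x)\Py(Y)}\,g_i(Y)}=\EE{g_i(Y)\mid X=x}.
\end{equation*}
By the \textbf{Diagonalization} property of Theorem~\ref{thm:defnPIC}, $\EE{g_i(Y)\mid X=x}=\sqrt{\lambda_i}\,f_i(x)$ for $1\le i\le d$; and for $i=0$, since $g_0\equiv 1$, $f_0\equiv 1$ and $\lambda_0=1$, we get $\beta_0(x)=\EE{1\mid X=x}=1=\sqrt{\lambda_0}\,f_0(x)$. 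Substituting back into the expansion of $h_x$ yields
\begin{equation*}
\frac{\Pxy(x,y)}{\Px(x)\Py(y)}=\sum_{i=0}^d\sqrt{\lambda_i}\,f_i(x)g_i(y)=1+\sum_{i=1}^d\sqrt{\lambda_i}\,f_i(x)g_i(y),
\end{equation*}
which is precisely \eqref{eq:reconstitution_formula}; note that the identity comes out automatically symmetric in $(x,y)$ because the coefficients $\beta_i(x)$ turned out to be proportional to $f_i\in\calL_2(\Px)$.

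I expect the main obstacle to be analytic bookkeeping rather than the algebra: one must justify that $h_x\in\calL_2(\Py)$ for $\Px$-a.e.\ $x$, and, in the infinite-dimensional case, that the basis expansion converges in $\calL_2$ and that interchanging summation with the expectation defining $\beta_i(x)$ is legitimate — this is exactly where the ``mild compactness assumptions'' are invoked. A fully rigorous route is to phrase everything through the spectral (singular value) decomposition of the conditional-expectation operators $\Pi_{Y=y}$ and $\Pi_{X=x}$ from \eqref{eq:compact}: their singular values are the $\sqrt{\lambda_i}$ with singular vectors $f_i,g_i$ (cf.\ Corollary~\ref{corr:pic}), and the reconstitution formula is then the statement that the integral kernel of the composed operator — which is the density ratio — coincides with its SVD expansion. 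One could alternatively expand on the $X$-side and verify the vanishing of the $f^\perp$-component symmetrically, but the one-sided argument above already delivers both claimed equalities.
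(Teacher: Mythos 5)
Your argument is correct. A point of comparison first: the paper does not actually prove Theorem~\ref{thm:reconstitution_formula} --- it is imported from the literature (Buja, 1990) with no appendix proof --- so there is no in-paper derivation to measure you against; the closest thing is Appendix~\ref{appendix:ca_pic}, which runs the identity \emph{backwards} in the finite case to recover CA from the PICs. Your proof is the standard one and it is sound: $h_x(\cdot)=\Pxy(x,\cdot)/(\Px(x)\Py(\cdot))$ is the density of $\Pygx(\cdot\mid x)$ with respect to $\Py$, its Fourier coefficients against the orthonormal system $\{g_i\}$ are exactly $\EE{g_i(Y)\mid X=x}$, and the diagonalization property converts these to $\sqrt{\lambda_i}\,f_i(x)$. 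The one genuinely good structural choice you made is expanding on the $Y$-side, where Theorem~\ref{thm:defnPIC} guarantees the $\{g_i\}$ are complete, so the $f^{\perp}$ component never enters; an $X$-side expansion would force you to separately argue that $\langle h_y, f^{\perp}\rangle=0$, which follows but costs an extra step. You also correctly flag the two analytic caveats that make the statement honest in the non-finite case: $h_x\in\calL_2(\Py)$ for $\Px$-a.e.\ $x$ follows from $\chi^2(X;Y)=\sum_i\lambda_i<\infty$ via Fubini, and the resulting identity holds in $\calL_2$ (equivalently a.e.) rather than pointwise, with the series converging because the conditional-expectation operator is Hilbert--Schmidt. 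No gaps.
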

This decomposition can be traced back to Lancaster's study on the decomposition of discrete joint distribution using polynomials \cite{lancaster1958structure}, which was later generalized by Hannan to the continuous case in \cite{hannan1961general}.
The Lancaster decomposition is at the core of the applications of PICs and principal functions studied here, including CA, classification boundary visualization, and multi-view and multi-modal learning, as we are about to discuss next.

\subsection{Generalizing Correspondence Analysis}\label{sec:generalizing_CA}
The Lancaster decomposition (Theorem~\ref{thm:reconstitution_formula}) is the main tool for generalizing CA to continuous variables. 
We make this connections precise in the following theorem, which demonstrates that the orthogonal factors found in CA are \emph{exactly} the principal functions.
\begin{thm}\label{thm:CA_PIC}
If $|\calX|$ and $|\calY|$ are finite, we set $[\mathbf{F}]_{i, j} = f_j(i)$, $[\mathbf{G}]_{k, j} = g_j(k)$ for $1 \leq i \leq |\calX|$, $1 \leq j \leq d$ and $1 \leq k \leq |\calY|$, and let $\mathbf{\Lambda} = \mathsf{diag}(\lambda_1, \cdots, \lambda_d)$.
Moreover, let $\mathbf{L}$, $\mathbf{R}$ and $\bSigma$ follow from \eqref{eq:svd_Q} and assume the diagonal entries of $\bSigma$ are in descending order. 
Then, the principal functions $\mathbf{F}$ and $\mathbf{G}$ are equivalent to the orthogonal factors $\mathbf{L}$ and $\mathbf{R}$ in the CA, and the factoring scores $\bSigma$ are the same as the PICs $\mathbf{\Lambda}$.
\end{thm}
\begin{proof}
See Appendix~\ref{appendix:ca_pic}.
\end{proof}

In other words, CA produces a PIC decomposition for discrete random variables $X$ and $Y$ over a finite support. 
Note, however, that the results in Section~\ref{sec:pics_principal functions} (Corollary~\ref{corr:pic} in particular) indicate that the orthogonal factors sought by CA could be computed for arbitrary distributions without the need of decomposing a contingency table \eqref{eq:contengency_table} as long as we can solve optimizations of the forms \eqref{eq:mmse} in theory and \eqref{opti4} in practice. 
This fact will enable CA to be performed at a large scale with continuous random variables by using the PIC estimator developed in Section~\ref{sec:pice}.

\section{Learning Principal Functions from Data}\label{sec:pice}
In the previous section, we demonstrated that the orthogonal factors found via CA (using SVD) are equivalent to the principal functions given by the PIC decomposition of $P_{X,Y}$ (Theorem~\ref{thm:CA_PIC}). 
Thus, we can (at least in theory)  perform CA by computing principal functions directly, without having to build a contingency table first. 
Principal functions, in turn, are well-defined for both discrete and continuous (or mixed) $X$ and $Y$, enabling CA to be extended to a broader range of data types. 
Thus, generalizing CA boils down to the problem of estimating the PICs and principal functions from $n$ data samples (realizations)  $\{x_k, y_k\}_{k=1}^n$ drawn i.i.d. from an unknown joint distribution $P_{X, Y}$.
For the rest of the paper, we use the term principal functions and PICs to indicate the orthogonal factors and factor scores, respectively.

Recall that \eqref{eq:mmse},~\eqref{eq:reconstitution_formula}, and Theorem~\ref{thm:defnPIC} suggest that the principal functions can be computed for arbitrary variables by finding maximally correlated functions in $\calL_2(P_X)$ and $\calL_2(P_Y)$. 
Finding such functions, however, requires a search over the space of all finite-variance functions of $X$ and $Y$, which is not feasible for high dimensional data. 
Thus, in order to approximate the principal functions and the PICs, we restrict our search to \emph{functions representable by neural networks}. 
We make use of the fact that the output of any neuron of a feed-forward neural network that receives $X$ as an input can be viewed\footnote{We assume that the outputs of a neural network have finite variance. This is a reasonable assumption since several gates used in practice have bounded value (e.g., sigmoid, tanh) and, at the very least, the output is limited by the number of bits used in floating point representations.} as a point in $\calL_2(P_X)$ (and equivalently for networks receiving $Y$ as input). 
Note that, in general, it is hard to analytically determine the principal functions except for special cases (see Section~\ref{sec:validation} for examples based on binary and Gaussian distributions, and \cite{buja1990remarks, makur2016polynomial} for further examples).

In this section, we design the Principal Inertia Component Estimator (PICE) using neural networks as a vessel to search over $\calL_2(P_X)$ and $\calL_2(P_Y)$.
The PICE estimates the PICs and the principal functions given samples drawn from $\Pxy$ by minimizing an appropriately defined loss function (described next) using stochastic gradient descent and backpropagation over deep neural networks. 
Moreover, we test the PICE on two synthetic datasets, and show that the PICE can reliably recover the PICs and the principal functions predicted by theory.

\subsection{Optimization}
For two random variables $(X,Y)$ (e.g., sample/label, distinct views of an image), consider the following $d$ functions of $X$ and $Y$ respectively, 
\begin{equation}
    \begin{aligned}
    \mathbf{\tilde{f}}(X) &\triangleq [\tilde{f}_1(X), \cdots, \tilde{f}_d(X)]^\intercal \in \Reals^{d\times 1},\\
    \mathbf{\tilde{g}}(Y) &\triangleq [\tilde{g}_1(Y), \cdots, \tilde{g}_d(Y)]^\intercal \in \Reals^{d\times 1}.
\end{aligned}
\end{equation}
When $X$ or $Y$ has finite support, it suffices to consider $d = \min\{|\mathcal{X}|, |\mathcal{Y}|\}-1$ (cf.\ Theorem~\ref{thm:defnPIC} in Section~\ref{sec:deep_ca}). 
Under these assumptions, the solution of the optimization problem
\begin{equation}\label{opti2}
\begin{aligned}
\min\limits_{\bA \in \Reals^{d\times d},\mathbf{\tilde{f}},\mathbf{\tilde{g}}} &\; \mathbb{E}\left[\|\bA\mathbf{\tilde{f}}(X)-\mathbf{\tilde{g}}(Y)\|^2_2\right]\\
\text{subject to}&\; \mathbb{E}\left[\bA\mathbf{\tilde{f}}(X)(\bA\mathbf{\tilde{f}}(X))^\top \right] = \mathbf{I}_d
\end{aligned}
\end{equation}
recovers the $d$ largest PICs.
To see why this is the case, let
\begin{equation}
    \bff(X)=\bA\mtf(X)=[\bff_1(X),\cdots,\bff_d(X)]^\top,
\end{equation}
and suppose that $\bff,\mtg$ and $\bA$ achieve optimality in \eqref{opti2}. Optimality under quadratic loss implies that $\tilde{g}_i(y)=\EE{f_i(X)\mid Y=y}$ for $i\in \{1,\dots,d\}$. 
Moreover, the orthogonality constraint ensures that the entries of $\bff(X)$ satisfy $\EE{f_i(X)f_j(X)}=\delta_{i,j}$, and thus form a basis for a  $d$-dimensional subspace of $\calL_2(\Px)$. 
As discussed in Section~\ref{sec:introduction}, conditional expectation on $Y$ is a (compact) operator from $\calL_2(\Px)\to \calL_2(\Py)$ (Eq.~\ref{eq:compact}) and, from orthogonality of $\bff(X)$, it follows directly from the Hilbert-Schmidt Theorem \cite[Theorem~VI.16]{reed1980functional} that the optimal value of \eqref{opti2} is $\sum_{i=1}^{d}\lambda_i$, with $\bff$ corresponding to the $d$ largest principal functions.

We can further simplify the objective function in (\ref{opti2}) using the following theorem.
The proof is based on the orthogonal Procrustes problem \cite{gower2004procrustes}, whose convergence properties have been studied in \cite{nie2017generalized}.
\begin{thm}\label{thm:opti}
The minimization in (\ref{opti2}) is equivalent to the following unconstrained optimization problem: 
\begin{equation}\label{opti4}
\begin{aligned}
\min\limits_{\mathbf{\tilde{f}}, \mathbf{\tilde{g}}} && - 2\|\bC_f^{-\frac{1}{2}}\bC_{fg}\|_d + \mathbb{E}[\|\mathbf{\tilde{g}}(Y)\|^2_2],
\end{aligned}
\end{equation}
where $\bC_f = \mathbb{E}[ \mathbf{\tilde{f}}(X)\mathbf{\tilde{f}}(X)^\top ]$, $\bC_{fg} = \mathbb{E}[ \mathbf{\tilde{f}}(X)\mathbf{\tilde{g}}(Y)^\top ]$, and $\|\bZ\|_d$ is the $d$-th Ky-Fan norm, defined as the sum of the singular values of $\bZ$ \cite{horn1990matrix}. Denoting by $\bA$ and $\bB$ the whitening matrices\footnote{We call $\bA$ and $\bB$ the whitening matrices since in (\ref{corr:pic}) it is clear that the covariance matrices of $\bff(X)$ and $\bg(Y)$ should be identity matrices.}
for $\mathbf{\tilde{f}}(\bX)$ and $\mathbf{\tilde{g}}(\bY)$, the principal functions are given by $\bff(X) = [f_0(X), \cdots, f_d(X)]^\top = \bA\mtf(X)$ and $\bg(Y) = [g_0(Y), \cdots, g_d(Y)]^\top = \bB\mtg(Y)$.
\end{thm}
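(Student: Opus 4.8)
The plan is to reduce the constrained problem \eqref{opti2} to the unconstrained form \eqref{opti4} in two stages: first eliminate the matrix variable $\bA$ by solving the orthogonal Procrustes subproblem in closed form, and then absorb the whitening of $\mtf$ into the objective so that only $\mtf$ and $\mtg$ remain as free variables. First I would rewrite the constraint $\EE{\bA\mtf(X)(\bA\mtf(X))^\top} = \bI_d$ as $\bA\bC_f\bA^\top = \bI_d$. This forces $\bA = \bQ\bC_f^{-1/2}$ for some orthogonal $\bQ \in \Reals^{d\times d}$ (taking any fixed symmetric square root $\bC_f^{-1/2}$; the choice does not matter since it only amounts to replacing $\bQ$ by another orthogonal matrix). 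Substituting, the objective becomes $\EE{\|\bQ\bC_f^{-1/2}\mtf(X) - \mtg(Y)\|_2^2}$, which expands to $\tr(\bQ\bC_f^{-1/2}\bC_f\bC_f^{-1/2}\bQ^\top) - 2\tr(\bQ\bC_f^{-1/2}\bC_{fg}) + \EE{\|\mtg(Y)\|_2^2} = d - 2\tr(\bQ\bC_f^{-1/2}\bC_{fg}) + \EE{\|\mtg(Y)\|_2^2}$, using $\bQ\bQ^\top = \bI_d$ for the first trace.

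The next step is to minimize over the orthogonal $\bQ$, i.e.\ to maximize $\tr(\bQ\bC_f^{-1/2}\bC_{fg})$. This is exactly the orthogonal Procrustes problem \cite{gower2004procrustes}: writing the singular value decomposition $\bC_f^{-1/2}\bC_{fg} = \bU\bSigma\bV^\top$, the maximum of $\tr(\bQ\bU\bSigma\bV^\top) = \tr(\bV^\top\bQ\bU\,\bSigma)$ over orthogonal $\bQ$ is attained at $\bQ = \bV\bU^\top$ and equals $\tr(\bSigma) = \sum_i \sigma_i(\bC_f^{-1/2}\bC_{fg}) = \|\bC_f^{-1/2}\bC_{fg}\|_d$, the $d$-th Ky-Fan norm. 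Plugging this back and dropping the additive constant $d$ (which does not affect the minimizer) yields precisely the unconstrained objective $-2\|\bC_f^{-1/2}\bC_{fg}\|_d + \EE{\|\mtg(Y)\|_2^2}$ of \eqref{opti4}. Finally, to identify the principal functions, I would appeal to the discussion preceding the theorem: at optimality $\bff(X) = \bA\mtf(X)$ is orthonormal, and the Hilbert--Schmidt argument already established that such an optimal $\bff$ consists of the $d$ largest principal functions; the analogous whitening $\bg(Y) = \bB\mtg(Y)$ follows because at the optimum $\mtg_i(Y) = \EE{f_i(X)\mid Y}$ so $\EE{\mtg(Y)\mtg(Y)^\top}$ is diagonalizable, and whitening it produces the orthonormal $g_i$ satisfying the diagonalization identity of Theorem~\ref{thm:defnPIC}.

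The main obstacle is the bookkeeping around the reparametrization $\bA = \bQ\bC_f^{-1/2}$: one must check that $\bC_f$ is invertible at any candidate optimum (it is, since the constraint $\bA\bC_f\bA^\top=\bI_d$ forces $\bC_f$ to be positive definite, ruling out degenerate $\mtf$ with rank-deficient covariance), and that the map $(\bA,\mtf,\mtg)\mapsto(\bQ,\mtf,\mtg)$ is a genuine bijection onto the feasible set so that the two optimization problems have the same optimal value and corresponding minimizers. A secondary subtlety is that \eqref{opti4} is now an optimization only over function classes (in practice, neural networks) rather than also over $\bA$; one should note that the Ky-Fan norm term implicitly re-introduces the optimal whitening, so no information is lost, and that the equivalence is an equivalence of optimal values with an explicit recipe ($\bA=\bV\bU^\top\bC_f^{-1/2}$, $\bB$ the whitening of $\mtg$) for recovering the constrained solution. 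The convergence behaviour of alternating between the $\mtf,\mtg$ update and the Procrustes/whitening step is what \cite{nie2017generalized} addresses, and I would cite it rather than reprove it.
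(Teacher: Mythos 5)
Your proposal is correct and follows essentially the same route as the paper's proof: reparametrize $\bA = \bQ\bC_f^{-1/2}$ with $\bQ$ orthogonal, expand the quadratic objective so the first trace collapses to the constant $d$, and solve the resulting orthogonal Procrustes subproblem in closed form to obtain the Ky--Fan norm term. The only differences are presentational — the paper invokes von Neumann's trace inequality before citing the Procrustes optimizer, whereas you go directly to the SVD argument, and you add a welcome remark on why $\bC_f$ is invertible at any feasible point.
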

\begin{proof}
See Appendix~\ref{appendix:opti}.
\end{proof}
Next, we show how the optimization \eqref{opti4} can be approximated by neural networks given samples drawn from $P_{X, Y}$.

\subsection{Implementation}\label{sec:implementation}
Observe that \eqref{opti4} is an unconstrained optimization problem over the space of all finite variance functions of $X$ and $Y$. 
As discussed previously in this section, we restrict our search to functions given by outputs of neural networks, parameterizing $\mathbf{\tilde{f}}(X)$ and $\mathbf{\tilde{g}}(Y)$ by  $\theta_F$ and $\theta_G$, respectively. Here, $\theta_F$ and $\theta_G$ denote weights of two neural networks, called the F-Net and the G-Net (Fig. \ref{fig:fg_nets}). 
The F-Net and the G-Net embed $X$ and $Y$ in $\Reals^d$, respectively. The parameters of each network can be found using gradient-based back-propagation of the objective (\ref{opti4}), as described next.

\begin{figure}[t!]
\centering
\includegraphics[width=.6\textwidth]{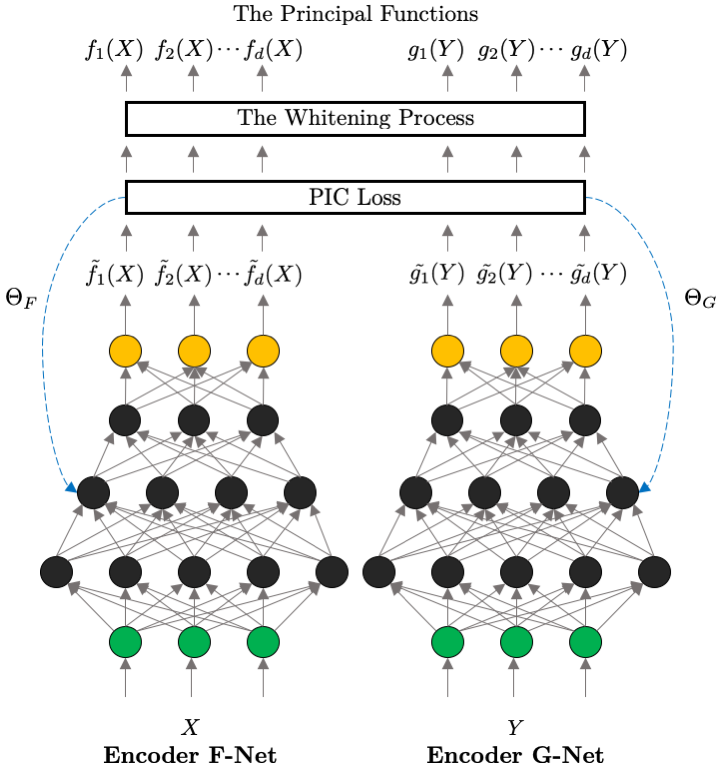}
\caption{The architecture of the PICE, consisting of two encoders F-Net and G-Net for $X$ and $Y$, respectively, to estimate the principal functions.  The PIC loss is given by (\ref{opti4}) and (\ref{eq:empirical_loss}), and can be back-propagated through F-Net and G-Net for the weights $\Theta_F$ and $\Theta_G$ simultaneously. The principal functions are then reconstructed by the whitening process in Appendix~\ref{appendix::algo}. }
\label{fig:fg_nets}
\end{figure}

Given $n$ realizations (samples) $\{x_k, y_k\}_{k=1}^n$ from $\Pxy$, we denote $\bx_n \triangleq [x_1, \cdots, x_n]$, $\by_n \triangleq [y_1, \cdots, y_n]$. 
The outputs from the FG-Nets can then be denoted as
\begin{eqnarray}
\begin{aligned}
\tilde{\bF}_n(\bx_n) &= [\mathbf{\tilde{f}}(x_1, \theta_F), \cdots, \mathbf{\tilde{f}}(x_n, \theta_F)]^\top \in \Reals^{d\times n},\\
\tilde{\bG}_n(\by_n) &= [\mathbf{\tilde{g}}(y_1, \theta_G), \cdots, \mathbf{\tilde{g}}(y_n, \theta_G)] \in \Reals^{d\times n}.
\end{aligned}
\end{eqnarray}
The empirical evaluations of the terms in (\ref{opti4}) are
\begin{subequations}\label{eq:empirical_loss}
\begin{eqnarray}
\bC_f &\approx& \frac{1}{n} \tilde{\bF}_n(\bx_n, \theta_F) \tilde{\bF}_n(\bx_n, \theta_F)^\top, \\
\bC_{fg} &\approx& \frac{1}{n} \tilde{\bF}_n(\bx_n, \theta_F) \tilde{\bG}_n(\by_n, \theta_G)^\top, \\
\mathbb{E}[\|\mathbf{\tilde{g}}(Y)\|^2_2] &\approx& \frac{1}{n} \sum_{i=1}^n\sum_{j=1}^d \mathbf{\tilde{g}}_j(y_i, \theta_G)^2.
\end{eqnarray}
\end{subequations}
When backpropagating the objective in (\ref{opti4}), calculating the singular values of $\bC_f^{-\frac{1}{2}}\bC_{fg}$ is equivalent to calculating the eigenvalues\footnote{Recall that the singular value of $\bC_f^{-\frac{1}{2}}\bC_{fg}$ is the square root of the eigenvalues of $(\bC_f^{-\frac{1}{2}}\bC_{fg})^\top (\bC_f^{-\frac{1}{2}}\bC_{fg}) = \bC_{fg}^\top \bC_f^{-1}\bC_{fg}$.} of $\bC_{fg}^\top \bC_f^{-1}\bC_{fg}$.
The latter expression can be directly cast and backpropagated using deep learning software libraries such as Tensorflow\footnote{We note that Tensorflow has built-in methods for computing the gradient of a matrix inverse. Such gradient was also computed in \cite{wang2015deep}.} \cite{abadi2016tensorflow}.
To avoid numerical instability, we not only clip the outputs of the F-Net to the interval $[-10000, 10000]$, but also impose $\ell_2$ regularization \cite[Chap. 7.1.1]{goodfellow2016deep} on $\bC_f^{-1/2}\bC_{fg}$, i.e., we compute the eigenvalues of $\bC_{fg}^\top (\bC_f^{-1}+\epsilon \bI_d)\bC_{fg}$ instead, where $\epsilon = 0.001$ is the regularization parameter to avoid unsuccessful matrix inversion.

After extracting $\tilde{\bF}_n(\bx_n)$ and $\tilde{\bG}_n(\by_n)$ from the F-Net and G-Net, respectively, we need to reconstruct the whitening matrices $\bA$ for $\mtf$ and $\bB$ for $\mtg$ to obtain the principal functions, as suggested in Theorem~\ref{thm:opti}.
Without loss of generality, we will assume that $\tilde{\bF}_n(\bx_n)$ and $\tilde{\bG}_n(\by_n)$ have zero-mean columns, which can always be achieved by subtracting the column-mean element-wise. Then $\bA$ is given by $\bA = \bU^\top \bC_f^{-1/2} $, with $\bU$ the left singular vectors of the matrix 
\begin{equation}
    \bL = \frac{1}{n} (\bC_f^{-1/2} \tilde{\bF}_n(\bx_n))(\bC_g^{-1/2} \tilde{\bG}_n(\by_n))^\top.
\end{equation}
The matrix $\bC_f^{-1/2}$ guarantees that $\tilde{\bF}_n(\bx_n)$ has orthonormal columns, while $\bU$ rotates the set of vectors to align with $\tilde{\bG}_n(\by_n)$.
By symmetry, $\bB = \bV^\top \bC_g^{-1/2}$, where $\bV$ are the right singular vectors of $\bL$. The produced matrices $\bF_n(\bx_n) = \bA \tilde{\bF}_n(\bx_n)$ and $\bG_n(\by_n) = \bB \tilde{\bG}_n(\by_n)$ satisfy
\begin{equation}
    \frac{1}{n} \bF_n(\bx_n)^\top\bF_n(\bx_n) = \frac{1}{n} \bG_n(\by_n)^\top \bG_n(\by_n) = \bI_d,
\end{equation}
and $\frac{1}{n}\bF_n(\bx_n)^\top\bG_n(\by_n) = \mathbf{\Lambda}$ is the diagonal matrix with the estimated square roots of the PICs. It should be emphasized that in the implementation and subsequent experiments, we estimate the whitening matrices $\bA$ and $\bB$ on the training set alone, and use these estimates for test sets. For clarity, we summarize this whitening process in Appendix~\ref{appendix::algo}.

Having investigated the theoretical properties of the PICs in Section~\ref{sec:deep_ca} and designed the pipeline for estimating the principal functions via deep neural networks in this section, we are now ready to explore how the generalized CA, via PICs and the principal functions, can be used in learning problems.

\subsection{Validation of PICE on Synthetic Data}\label{sec:validation}
Before going into the applications, we validate the PICE through two synthetic examples---one on discrete data and one on continuous data.
Implementation details are provided in Appendix~\ref{appendix:exp}.

\subsubsection{Discrete Case: Binary Symmetric Channels (BSCs)}\label{sec:bsc}

\begin{table}[t]
\begin{center}
\begin{tabular}{llllllllll}
\multicolumn{5}{c}{\bf BSC PICs}\\
\hline
PICE      & $0.8011$ & $0.7942$ & $0.7918$ & $0.7883$  \\
Analytic value & $0.8000$ & $0.8000$ & $0.8000$ & $0.8000$  \\
\multicolumn{5}{c}{\bf Gaussian PICs}\\
\hline
PICE      & $0.7007$ & $0.4938$ & $0.3376$ & $0.2037$ \\
Analytic value & $0.6977$ & $0.4675$ & $0.2979$ & $0.2113$ \\
\end{tabular}
\caption{PICE reliably approximates the top four PICs in the BSC and Gaussian cases.}
\label{tab:pic_synthetic}
\end{center}
\end{table}

\begin{figure}[!tb]
    \centering
    \includegraphics[width=.8\textwidth]{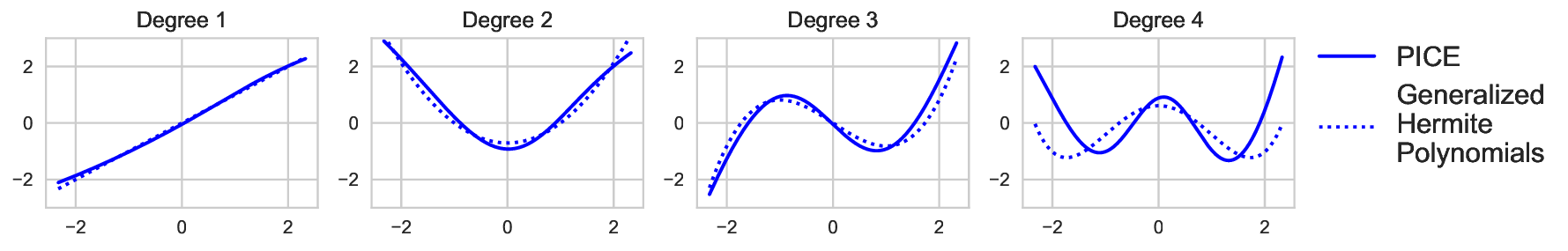}
    \caption{PICE recovers the Hermite polynomials, the principal functions in the Gaussian case.}
    \label{fig:hermite}
\end{figure}

We consider $X = (X_1,  \cdots, X_n)$ and $Z = (Z_1, \cdots, Z_n)$, where $X_i \sim Bernoulli(p)$ and $Z_i \sim Bernoulli(\delta)$, and $Y = X \oplus Z$, where $\oplus$ is the element-wise exclusive-or operator and $\delta$ is the crossover probability. 
By symmetry, it is sufficient to let $\delta \leq 1/2$. 
Note that $Y$ can be viewed as the output of $n$ uses of the discrete memoryless BSC with input $X$. 
For any additive noise binary channel, the PICs can be mathematically determined \cite[Section~2.4]{o2014analysis} or \cite[Section~3.5]{du2017principal} --- the PICs between $X$ and $Y$ have the following form: there are $\binom{n}{k}$ PICs of value $(1-2\delta)^k$, $0 \leq k \leq n$ \cite[Chap. 3.5]{du2017principal}.
We set $X$ to be a binary string of length $5$, $\delta = 0.1$, and $p = 0.1$.
The encoders F-Net and G-Net are neural networks with two hidden layers and ReLU activation, $32$ units at each hidden layer.
We train the PICE with standard gradient descent (learning rate $0.01$) across the entire training set for $2000$ epochs.
The results in Table~\ref{tab:pic_synthetic} show that the PICE reliably approximates the PICs\footnote{In this case, for $n = 5$ and $\delta = 0.1$, there are $\binom{5}{0} = 1$ PIC of value $(1-0.2)^0 = 1$, $\binom{5}{1} = 5$ PICs of value $(1-0.2)^1 = 0.8$, $\binom{5}{2} = 10$ PICs of value $(1-0.2)^2 = 0.64$, and so on.}, and we observed this consistent behaviour over multiple runs of the experiment. 

\subsubsection{Continuous Case: Gaussian Variables}\label{sec:gaussian}
When $X \sim \calN(0, \sigma_1^2\mathbf{I}_n)$, $Y|X \sim \calN(X, \sigma_2^2\mathbf{I}_n)$, the set of principal functions are the Hermite polynomials \cite{abbe2012coordinate}. More precisely, letting the $i^\text{th}$ degree Hermite polynomial be given by 
\begin{equation}
    H_i^{(r)}(x) \triangleq \frac{(-1)^i}{\sqrt{i!}} e^{\frac{x^2}{2r}} \frac{d^i}{dx^i} e^{-\frac{x^2}{2r}}, r \in (0, \infty),
\end{equation}
 then the $i^\text{th}$ principal functions $f_i$ and $g_i$ are $H_i^{(\sigma_1)}$ and $H_i^{(\sigma_1+\sigma_2)}$ respectively, and the $i^\text{th}$ PIC can then be given by their inner product. 
We pick $\sigma_1 = \sigma_2 = 1$, and generate $5000$ training samples for $X$ and $Y$ according to the Gaussian distribution and $1000$ test samples.
The PICE is composed of two hidden layers with hyperbolic tangent activation, $30$ units per hidden layer.
We train over the entire training set for $8000$ epochs using a gradient descent optimizer with learning rate $0.01$.
Table~\ref{tab:pic_synthetic} and Fig.~\ref{fig:hermite} show estimations of PICs and principal functions.

\section{Scaling Up CA using PICE}\label{sec:gca}
We illustrate next how to perform large-scale CA with continuous variables using PICE, as discussed in Section~\ref{sec:deep_ca}. 
We use PICE to produce factoring planes for the recipe \cite{kaggle_what_cooking} and wine quality \cite{asuncion2007uci} datasets described next.
All the estimations of the PICs reported in this section are averaged over $10$ random splits of the data.
The observed standard deviations were within $1\%$ of the averages.

\subsection{Kaggle What's Cooking Recipe Data}\label{sec:exp_kaggle}
\begin{table}[!tb]
{\small 
\begin{center}
\begin{tabular}{lllllllllll}
\multicolumn{1}{c}{\bf }  &\multicolumn{10}{c}{\bf Top ten principal inertia components} \\ 
\hline
PICE &  \boldsymbol{$0.9092$} & \boldsymbol{$0.8667$} & \boldsymbol{$0.8412$} & \boldsymbol{$0.7932$} & \boldsymbol{$0.7391$} & \boldsymbol{$0.6413$} & \boldsymbol{$0.6018$} & \boldsymbol{$0.4792$} & \boldsymbol{$0.4508$} & \boldsymbol{$0.2821$} \\
SVD  & $0.4504$ & $0.3894$ & $0.3149$ & $0.2943$ & $0.2413$ & $0.1958$ & $0.1547$ & $0.1191$ & $0.1146$ & $0.1035$ \\
\multicolumn{1}{c}{\bf }  &\multicolumn{10}{c}{\bf Correlations between transformed samples} \\ 
\hline
CCA  & $0.1915$ & $0.1751$ & $0.1342$ & $0.1083$ & $0.1050$ & $0.0823$ & $0.0623$ & $0.0488$ & $0.0485$ & $0.0431$ \\
KCCA & $0.6585$ & $0.1223$ & $0.0860$ & $0.0636$ & $0.0320$ & $0.0131$ & $0.0090$ & $0.0089$ & $0.0051$ & $0.0011$ \\
\end{tabular}
\end{center}
\caption{{PICE outperforms contingency table-based CA and CCA/ KCCA on Kaggle What's Cooking dataset to explore dependencies in samples.}}
\label{tab:pic_kaggle}}
\end{table}

\begin{figure*}[!tb]
\centering
\includegraphics[width=.9\textwidth]{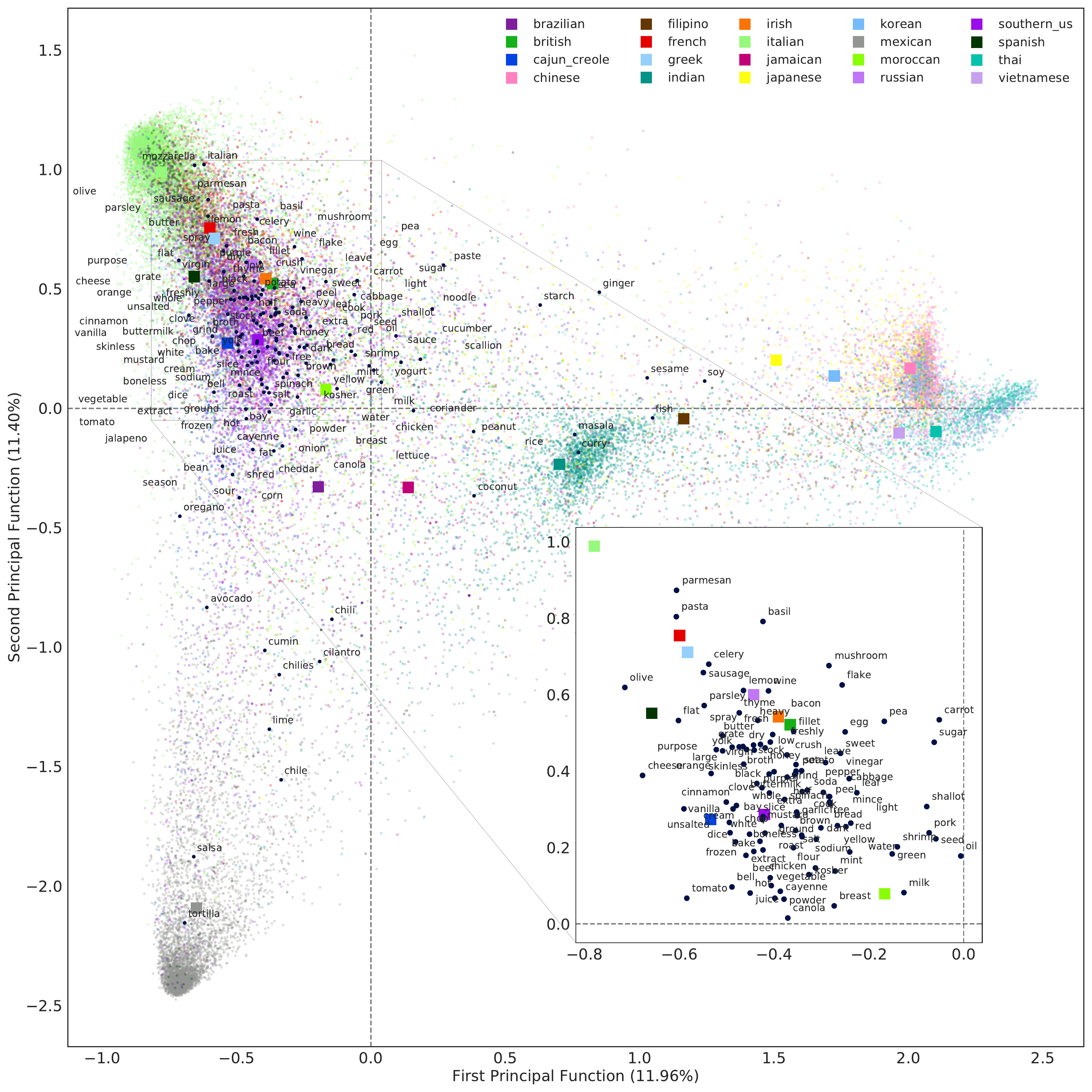}
\caption{
The first factoring plane of CA on Kaggle What's cooking dataset (Colored dots: recipe, dark blue: ingredient). The x-axis corresponds to $f_1(x)$ and $g_1(y)$, and the y-axis to $f_2(x)$ and $g_2(y)$, where each $x$ is a recipe and each $y$ are cuisine labels.
}
\label{fig:kaggle_ca}
\end{figure*}

\begin{figure*}[!tb]
\centering
\includegraphics[width=.9\textwidth]{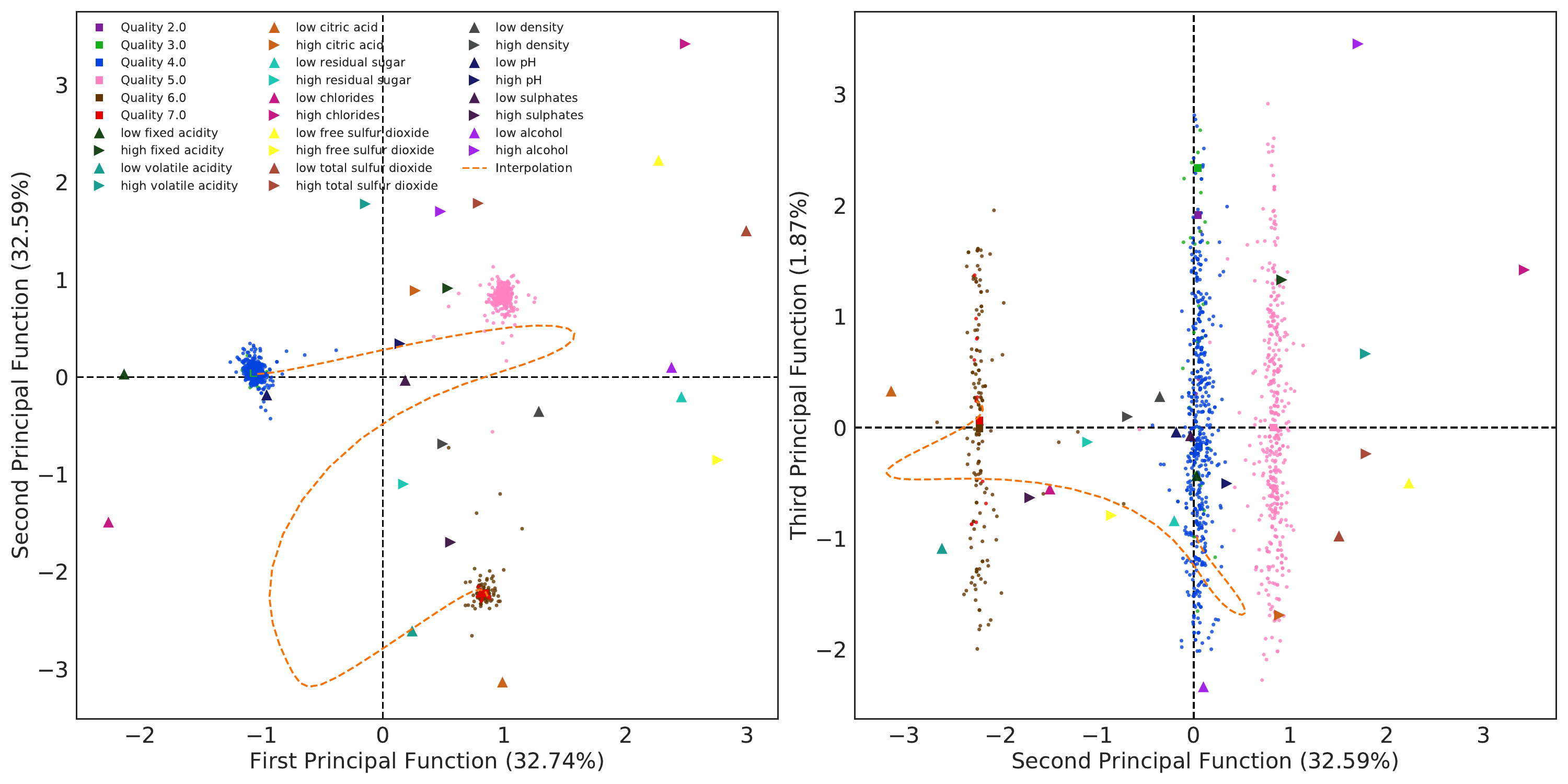}
\caption{{The first and second factoring planes on UCI wine quality dataset.}}
\label{fig:wine_ca}
\end{figure*}
The Kaggle What's cooking dataset \cite{kaggle_what_cooking} contains $39774$ recipes as $X$, composed of $6714$ ingredients (e.g., peanuts, sesame, beef, etc.), from $20$ types of cuisines as $Y$ (e.g., Japanese, Greek, Southern US, etc.). 
The recipes are given in text form, so we pre-process the data in order to combine variations of the same ingredient and keep only the $146$ most common ingredients. 
In Table~\ref{tab:pic_kaggle}, we show that PICE outperforms contingency table-based CA using SVD\footnote{We only consider combinations of ingredients observed in the dataset as possible outcomes of $X$.}, with the resulting PICs being more correlated (i.e., achieving a higher value of Eq. (\ref{opti4})) than its contingency table-based counterpart. 

In Table~\ref{tab:pic_kaggle}, we also compare the PICE with baseline techniques such as CCA and KCCA (with radial basis function kernels \cite{goodfellow2016deep}).
The resulting low-dimensional representation produced by PICE captures a higher correlation/variance than these other embeddings. We recognize that these results may vary if other kernels were selected, but note that the PICE \emph{does not require any form of kernel selection by a user prior to application}. 
In Fig.~\ref{fig:kaggle_ca} we display a traditional CA-style plot produced using PICE, showing the first factoring plane of the CA (i.e., the first and second principal functions for $X$ and $Y$). 
The intersection of two dashed lines ($x=0$ and $y=0$) indicates the space where $X$ and $Y$ have insignificant correlation.

There are three key observations which can be extracted from Fig.~\ref{fig:kaggle_ca}.
First, we observe clear clusters under the representation learned by PICE which can be easily interpreted. The cluster on the right-hand side represents East-Asian cuisines (e.g., Chinese, Korean), the one on the left-hand side represents Western cuisine (e.g., French, Italian) and in between sits Indian cuisine. 
Second, we observe that the first principal function learns to distinguish Asian cuisine (e.g., Chinese, Korean) from Western cuisine, naturally separating these contrasting culinary cultures. Interestingly, Indian cuisine sits in between Asian and Western cuisine, and  Filipino cuisine  is represented between Indian and Asian cuisine over this axis. The second principal function further indicates finer differences between Western cuisines, and singles out Mexican cuisine. 
Third, by plotting the ingredients on this plane (i.e., recipes with only one ingredient), we can determine \emph{signature} ingredients for different cuisine types. For example, despite the fact that ginger is in both Western and Asian dishes, it is closer in the factor plane to Asian cuisine, revealing that it plays a more prominent role in this cuisine. Some ingredients share much stronger correlation with the cuisine type, e.g., curry in Indian dishes and tortilla in Mexican ones.

\subsection{UCI Wine Quality Data}\label{sec:exp_wine}
The UCI wine quality dataset contains $4898$ red wines with $11$ physico-chemical attributes (e.g., pH value, acid, alcohol) and $6$ levels of qualities (from 2 to 7). 
We set $X$ to be the $11$ attributes and $Y$ be the quality, and report the results of CA in Fig.~\ref{fig:wine_ca}. 
Note that, since the attributes are continuous, performing contingency table-based CA is not well-defined for this case.
In Fig.~\ref{fig:wine_ca} (left), we can see that despite the existence of $6$ quality, the principal functions discover three sub-clusters, namely poor quality (less or equal to 4), medium quality (equal to 5), and high quality (6 and above).
Moreover, the second factoring plane (Fig.~\ref{fig:wine_ca} (right)) implies that the $11$ physico-chemical attributes of each red wine has in fact a latent dimension of $2$, since the third principal function can no longer capture effective correlations within the data.
Fig.~\ref{fig:wine_ca} also indicates how the attributes affects the quality of a wine. For example, high quality wines (quality $6$ and $7$) tends to have low citric and volatile acidity, but with high sulphates. 
Finally, we randomly sample a low quality and a high quality wine, and take the linear interpolation of their features. We represent the path that this linear interpolation draws in the factorial plane by the orange line in Fig.~\ref{fig:wine_ca}. This sheds light on how a ``bad'' wine can be transformed into a ``good'' wine, going through the ``medium'' wine category, as well as the inherent non-linearity of the principal functions.

\section{Recovering the Principal Functions from Black-Box Classification Models}\label{sec:reconstruct_pics}
Assume $\calY$ has finite cardinality $|\calY| = d$ (e.g., number of labels) and consider a classification model trained on i.i.d. samples drawn from $\Pxy$ that outputs a conditional distribution (belief) $\hat{P}_{Y|X}$.
This conditional distribution could be the output of a neural network with a softmax readout layer, a logistic regression, or a random forest. 
The belief $\hat{P}_{Y|X}$ can itself be viewed as a distribution, or a ``noisy channel'': one would hope that $\hat{P}_{Y|X}\approx P_{Y|X}$. 
This is indeed the desideratum, for example, of models trained under log-loss \cite{goodfellow2016deep}.

We demonstrate next that the principal functions and the PICs can be recovered from the ``virtual'' distribution $P_X \hat{P}_{Y|X}$ directly from fresh input samples $\{x_k\}_{k = 1}^n$ generated i.i.d. from the input distribution $P_X$.  This  leads to a Lancaster decomposition of a given classifier, and can be considered as an alternative method for computing principal functions when the dimensionality of $Y$ is low (e.g., in image recognition): first train a well-calibrated model, then approximate the  principal functions using the method below. We then use the ensuing decomposition to visualize the decision boundary and the training process of feed-forward neural networks.

\subsection{A Lancaster Decomposition for Classifiers}
A classifier (or a classification model) which ouptuts a belief $\hat{P}_{Y|X}$ is suitable for a Lancaster decomposition. Samples from the resulting ``virtual'' joint distribution $P_X \hat{P}_{Y|X}$ can be obtained by querying the classifier with samples $X$ drawn from the population distribution $P_X$, even if the classifier itself is  a black-box. In principle, this is enough to apply the PICE method and obtain principal functions and the PICs. However, the main drawback of this approach is that it would require training two neural networks (namely the F-Net and the G-Net) in addition to the already trained classifier. When the dimension of $Y$ is low, as is often customary in many classification problems, we propose next an alternative procedure for estimating the principal functions which leverages this lower-dimensional search space and circumvents the need of training additional classifiers. These principle functions can then be used to obtain a 
Lancaster Decomposition of the classifier $\hat{P}_{Y|X}$.

Consider the output (belief) produced by a classifier $\hat{P}_{Y|X}$ over $n$ samples (e.g., the output of a neural network with softmax readout later applied to $n$ samples). We denote this output by the matrix $\hat{\mathbf{P}}_{Y|X} \in \Reals^{n\times d}$. Moreover, let $\hat{\mathbf{p}}_Y \triangleq \frac{1}{n} \hat{\mathbf{P}}_{Y|X}^\top \mathbf{1}_n$ and $\hat{\mathbf{D}}_Y \in \Reals^{d\times d}$ be the diagonal matrix with diagonal given by $\hat{\mathbf{p}}_Y$.
The principal functions $g_i(Y)$ of $ \hat{P}_{Y|X} $ can then be approximated by the following optimization problem, which follows directly from the connection between the PICs and CA in Theorem \ref{thm:CA_PIC}:
\begin{eqnarray}\label{eq:recovering_from_model}
\begin{aligned}
\max\limits_{\mathbf{U} \in \Reals^{d\times d}}&\;\frac{1}{n} \text{trace}\left( \mathbf{U}^T\hat{\mathbf{D}}_Y^{-1/2}\hat{\mathbf{P}}_{Y|X}^\top \hat{\mathbf{P}}_{Y|X} \hat{\mathbf{D}}_Y^{-1/2}\mathbf{U} \right)\\
\text{s.t.}&\; \mathbf{U}^\top\mathbf{U} = \mathbf{I}_d.
\end{aligned}
\end{eqnarray}
The solution of \eqref{eq:recovering_from_model} is given by the left singular vectors $\mathbf{U}$ from the SVD of $\hat{\mathbf{D}}_Y^{-1/2}\hat{\mathbf{P}}_{Y|X}^\top \hat{\mathbf{P}}_{Y|X} \hat{\mathbf{D}}_Y^{-1/2}$, and the matrix $\mathbf{G} \in \Reals^{d\times d}$ which $(i,j)$-th entry is the $j$-th principal function $g_j$ of label $i$ is given by $\mathbf{G} = \hat{\mathbf{D}}_Y^{-1/2}\mathbf{U}$.
Indeed, if we substitute $\mathbf{G} = \hat{\mathbf{D}}_Y^{-1/2}\mathbf{U}$ back in the objective of \eqref{eq:recovering_from_model}, we are maximizing an empirical approximation of $\|\mathbb{E}[g(Y)|X]\|_2^2$ computed over $n$ samples.
Moreover, by Theorem~\ref{thm:defnPIC}, we know $f_i(x) \propto \mathbb{E}[g_i(Y)|X=x] = \sum_{y\in\calY} g_i(y) \hat{P}_{Y|X}(y|x)$ and $\EE{f_i^2(X)} = 1$; thus for all $i = 0, \cdots, d-1$ the principal functions $f_i(x)$ of $X$ is
\begin{eqnarray}\label{eq:recover_f}
f_i(x) = \frac{\sum_{y\in\calY} g_i(y) \hat{P}_{Y|X}(y|x)}{\sqrt{\frac{1}{n}\sum_{j=1}^n \left(\sum_{y\in\calY} g_i(y) \hat{P}_{Y|X}(y|x_j)\right)^2}},\;\forall x \in \calX.
\end{eqnarray}

In summary, for a given classification model $\hat{P}_{Y|X}$ and $n$ samples drawn from an input distribution $P_X$, the  pairs of principal functions for this classifier $\{f_i,g_i\}_{i=1}^d$---and, hence, the Lancaster Decomposition \eqref{eq:reconstitution_formula}---can be recovered by (i) solving \eqref{eq:recovering_from_model} for approximating $g_i$, and then recovering $f_i$ via \eqref{eq:recover_f}. We will apply this procedure in the following to subsections to visualize classification boundaries and the training of neural networks.

\subsection{Classification Boundary Visualization}\label{sec:boundary_visualization}
Producing a low dimensional representation of high-dimensional data that preserves certain relevant structures is an important problem in data science for both visualization and interpretation.
Several learning techniques have been studied to produce such dimension reduction, for example, the grand tour \cite{asimov1985grand}, the t-distributed Stochastic Neighbor Embedding (t-SNE) \cite{maaten2008visualizing}, and Uniform Manifold Approximation and Projection (UMAP) \cite{mcinnes2018umap}.
These techniques, despite being widely used, are not designed visualize the decision boundary of a given classifier. Visualizing a decision boundary, in turn, is useful for interpreting the outputs produced by a model.

Next, we perform CA of a given fixed classifier in order to visualize its decision boundary. With the recovered principal functions and PICs of a given  classification model described above, we are able to perform CA of a classifier and visualize its decision boundary by using a Lancaster decomposition.
We express the Lancaster decomposition \eqref{eq:reconstitution_formula} in the matrix form $\hat{\mathbf{P}}_{Y|X} = \mathbf{F} \mathbf{G} \mathbf{\Lambda} \hat{\mathbf{P}_Y}$, where $\mathbf{\Lambda}$ and $\hat{\mathbf{P}}_Y \in \Reals^{d\times d}$ are $\mathsf{diag}(\bm{\lambda})$ and $\mathsf{diag}(\hat{\mathbf{p}}_Y)$, and $\bm{\lambda} = [\lambda_0, \cdots, \lambda_{d-1}]^\top$.
The matrix form suggests that given $\mathbf{G}$ and $\mathbf{\Lambda}$ recovered from a classifier, the principal functions $\mathbf{F}$ corresponding to the classification likelihood $\hat{\mathbf{P}}_{Y|X}$ can be reconstructed by
\begin{eqnarray}\label{eq:find_boundary}
\mathbf{F} = \hat{\mathbf{P}}_{Y|X} (\mathbf{G}\mathbf{\Lambda} \hat{\mathbf{P}}_Y)^{-1}.
\end{eqnarray}
In particular, when $\hat{\mathbf{P}}_{Y|X}$ is selected as the set of likelihoods that gives ambiguous decision\footnote{For example, in a $3$-class classification problem, the likelihood $[0.4, 0.4, 0.2]$ gives ambiguous decisions between the first and second classes.}, the reconstructed principal functions $\mathbf{F}$ fall on the decision boundaries of classifier used to obtain $\mathbf{G}$ and $\mathbf{\Lambda}$.

We illustrate this approach on a classifier trained on the CIFAR-10 dataset  \cite{krizhevsky2009learning}. 
We pick the images of ``truck'', ``automobile'' and ``'ship'' for the sake of illustration\footnote{Since there are only two principal functions in this case the full decomposition is captured in a two-dimensional plane.}, obtain training accuracy $90.24\%$ and test accuracy $86.83\%$, and recover the $\mathbf{G}$ and $\mathbf{\Lambda}$ from the classifier. 
Since there are three classes, the three decision boundaries come from the likelihoods of the forms $[p, p, 1-2p]$, $[p, 1-2p, p]$ or $[1-2p, p, p]$, where $p \in [1/3, 1/2]$.   
We uniformly sample $100$ points $p_i$ in $[1/3, 1/2]$, build $[\hat{\mathbf{P}}_{Y|X}]_{i, \cdot} = [p_i, p_i, 1-2p_i]$ for $i \in [100]$, and recover the principal functions $f_i$ corresponding to the decision boundary from the likelihoods $[p, p, 1-2p]$ by \eqref{eq:find_boundary}.
Similarly, we can recover the principal functions corresponding to other decision boundaries.
In Fig.~\ref{fig:boundary}, we can see that the first principal function distinguishes between automobile/truck and ship, which is an easier relation to distinguish compared to automobile and truck (second principal function). 
Moreover, we also dislpay the image samples that are close to the decision boundary in order to identify which images ``confuse'' this classifier.
In this example, the images of a automobile parking at the beach or a truck with large blue sky (pointed by the red arrows in Fig.~\ref{fig:boundary}) can be easily confused with a ship, indicating that a potential feature captured by the classifier to identify ship is having a large portion of blue  in an image.
Note that unlike t-SNE \cite{maaten2008visualizing} or UMAP \cite{mcinnes2018umap}, which preserve the relative geometric distance between samples, the proposed method in \eqref{eq:find_boundary} preserves the difference of the likelihood, i.e., if $\hat{P}_{Y|X=x_1} \approx \hat{P}_{Y|X=x_2}$, then $f_i(x_1) \approx f_i(x_2)$ for all $i$.
Moreover, the principal functions corresponding to the decision boundary are linear since \eqref{eq:find_boundary} is a linear transformation of the (non-linear) principal functions.

\begin{figure}[!tb]
\centering
\includegraphics[width=.6\textwidth]{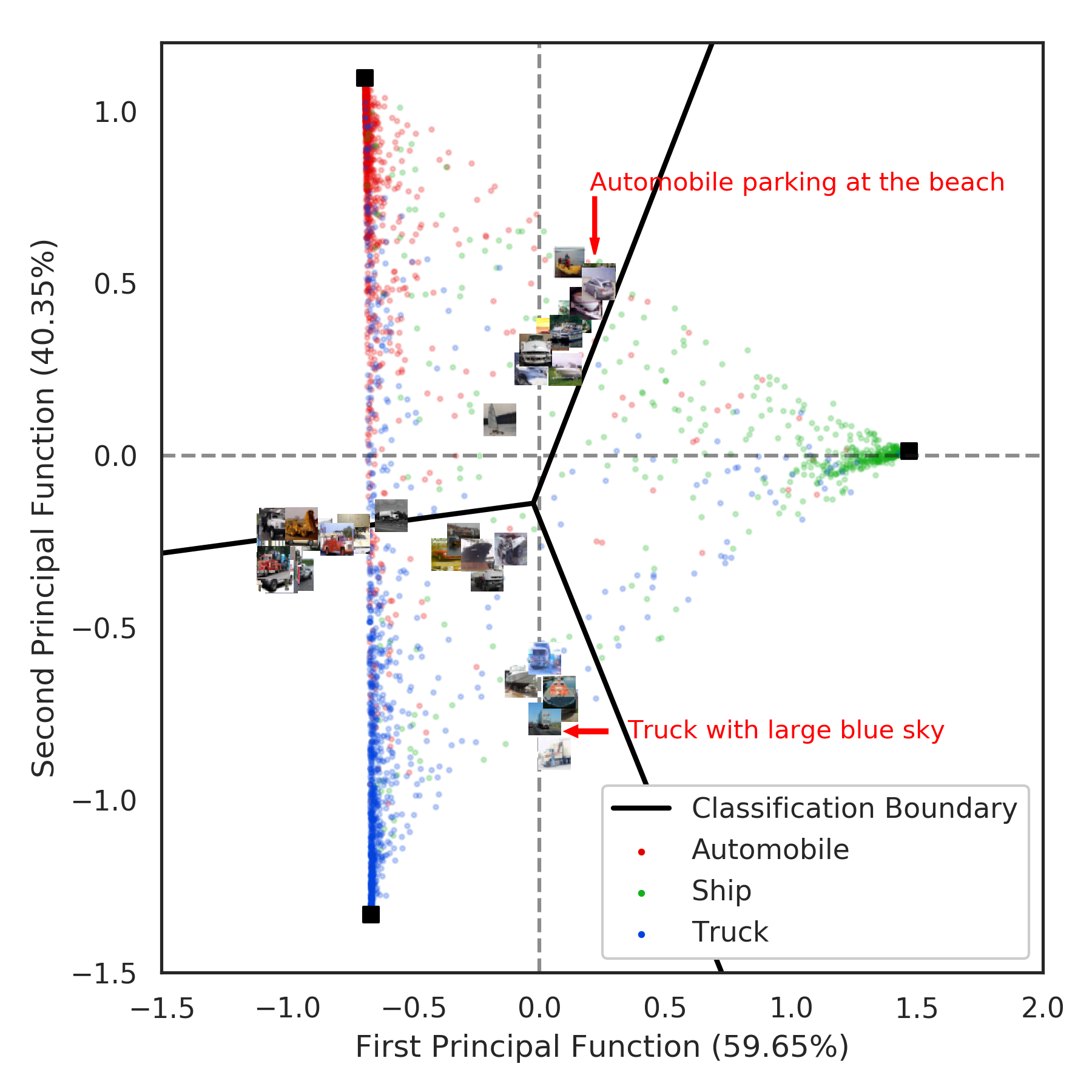}
\caption{{The first factoring planes of samples from three classes in CIFAR-10, the decision boundary, and the confusing images.}}
\label{fig:boundary}
\end{figure}

\subsection{Visualization the Training Process}\label{sec:exp_training_process}
Understanding the training process of a neural network is one of the main challenges in machine learning research \cite{gish1990probabilistic, shwartz2017opening, guidotti2018survey}. 
For example, does the classifier learn to distinguish all the classes one at a time or simultaneously? What is the class that is most difficult to classify during training or the last one learned?
We illustrate next that by performing CA of a classifier during its training process these questions can be partially answered. 

We use as examples the MNIST \cite{lecun1998gradient} and CIFAR-10 \cite{krizhevsky2009learning} datasets.
The MNIST dataset has $60000$ images for training and $10000$ for test, where each image has $28 \times 28$ pixels handwritten digit (from $0$ to $9$) in gray-scale.
We train a classifier using the \textit{AdamOptimizer} \cite{kingma2014adam} based on a simple convolutional neural network, and for each mini-batch, we store the output of the softmax readout layer, reconstruct the principal functions, and concatenate the corresponding CA plots of test data into a video available at \url{https://www.youtube.com/watch?v=RIfxdAwhtrI}.
The final accuracy for training set is $95.30\%$ and $95.01\%$ for test set.
As we can see, before the $50$th batch, the neural network is unable to perform accurate classification; however, after that the classifier suddenly distinguishes the samples in each classes.
This indicates that for the MNIST dataset, the classifier undergoes a phase transition during learning.

In contrast, the CIFAR-10 dataset, which is a more challenging dataset than MNIST, contains $32 \times 32$ colored images in $10$ categories (e.g., dog, cat, truck, car), with $50$K/$10$K images for training/testing.
In \url{https://www.youtube.com/watch?v=8mGiXamxcEU}, we can see that the classifier first learns to distinguish between ship/airplane, automobile/truck and the rest. 
Subsequently, it learns to differentiate ship from airplane and truck from automobile gradually in the remaining of the training steps.
The accuracy for training set is $68.35\%$ and $61.68\%$ for test set.

\section{Multi-View and Multi-Modal Learning}\label{sec:mv_mm}
In the previous subsection, we considered $X$ as samples and $Y$ as classification labels. 
However, the PICs are not restricted to this use case scenario; in fact, $X$ and $Y$ can be drawn from the same data type.
A relevant example of this setting is given by multi-view learning.

\subsection{Latent Dimensionality in Multi-View Learning}\label{sec:multiview}

Multi-view learning \cite{hu2018sharable, zhang2018generalized, wang2015deep} is a popular approach to discover features used in classification tasks.
Taking images as an example, consider pictures of the same object taken from two (or more) distinct perspectives, e.g., different angles, light conditions. These two views correspond to $X$ and $Y$.
A central task of multi-view learning is to identify a common latent space for the multiple views. 
This is classically done by constructing a low-dimensional embedding of $X$ and $Y$, and then applying a clustering algorithm on this low-dimensional embeddings, e.g., \cite{wang2015deep}. 
These common embeddings of the views are then analyzed in order to identify features that are relevant in a classification task at a later stage \cite{benton2017deep, hardoon2004canonical}.
 
A key question in this unsupervised setting is \emph{what is the dimension of the common latent space shared by both images?}
In effect, this is analogous to determining a basis for the latent space shared by the views.  
The choice of the number of clusters is often done via heuristics \cite{wang2016deep, benton2017deep, hardoon2004canonical}.
We explain next how the PICs can be used to determine the features that are common to both images, as well as the number of such features, thus giving a principled method to answer the question above.

Multi-view (two-view) learning can be formalized as follows. 
We say $X$ and $Y$ are multiple views of the same object $W$ if $X$ and $Y$ are independent given $W$, i.e., $X - W - Y$, where $W$ takes value in a finite set $\mathcal{W} = [M]$.
Under this assumption, and given a dataset $\mathcal{D} = \{(x_k, y_k)\}_{k = 1}^n$ where each pair $(x_k, y_k)$ is drawn i.i.d. according to the distribution $\Pxy$, the goal of multi-view learning is to recover:
\begin{enumerate}
    \item[(i)] the dimension of the latent variable, i.e., $M$;
    \item[(ii)] two sets of features, $\{\phi_m(X)\}_{m=1}^M$ for the first view $X$ and $\{\varphi_m(Y)\}_{m=1}^M$ for the second view $Y$ such that they are maximally correlated.
\end{enumerate}
Multi-view learning lies naturally within the PIC analysis.
If we compute the first $K \geq M$ principal functions $\{f_i(X), g_i(Y)\}_{i=1}^K$ for the two views using, for example, the PICE, then $\{\phi_m(X)\}_{m=1}^M \subseteq \{f_i(X)\}_{i=1}^K$ and $\{\varphi_m(Y)\}_{m=1}^M \subseteq \{g_i(Y)\}_{i=1}^K$.
Moreover, since there is only a $M$-dimensional latent space shared by $X$ and $Y$, the first $M$ PICs take higher values than the rest of the PICs. Thus, a sudden drop in the value of the PICs is an indication of the size of the latent space.

\begin{figure}[!tb]
    \centering
    \includegraphics[width=0.6\textwidth]{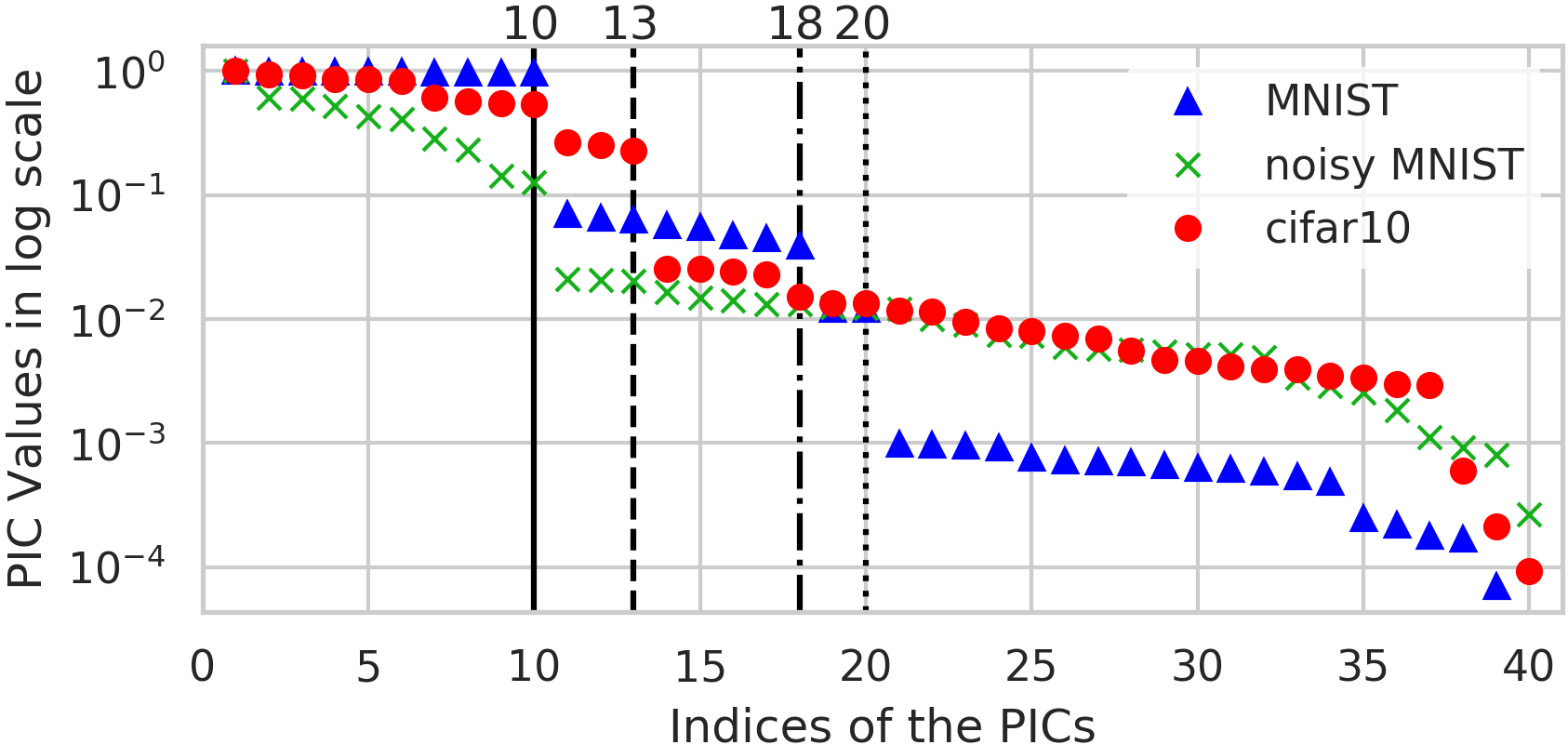}
    \caption{The first $40$ PICs of the two views of images.}
    \label{fig:mv}
\end{figure}

We show next how the PICE can be used to empirically quantify the dimensionality of the common latent space shared by different views in multi-view representation learning \cite{chaudhuri2009multi,arora2013multi} on three datasets: MNIST \cite{lecun1998gradient}, noisy MNIST \cite{wang2015deep}, and CIFAR-10 \cite{krizhevsky2009learning}.
We extract $40$ principal functions for each dataset.

\begin{table}[!tb]
\begin{center}
\begin{tabular}{llllllllll}
\multicolumn{8}{c}{\bf Top Eight PICs} \\ 
\hline
$0.317$ & $0.201$ & $0.095$ & $0.061$ & $0.047$ & $0.045$ & $0.031$ & $0.017$\\
\end{tabular}
\caption{Top eight PICs between images and captions from the PICE for the Flickr-$30$k dataset.}
\label{tab:pic_mm}
\end{center}
\end{table}

\begin{table*}[t!]
\centering
\begin{tabular}{| m{2.5cm} | m{11.5cm} | m{2cm} |}
\hline
\hfil Flickr Images & \hfil Original Captions & \hfil Tags \\
\hline
    \begin{center}\includegraphics[width=.6\linewidth, height=20mm]{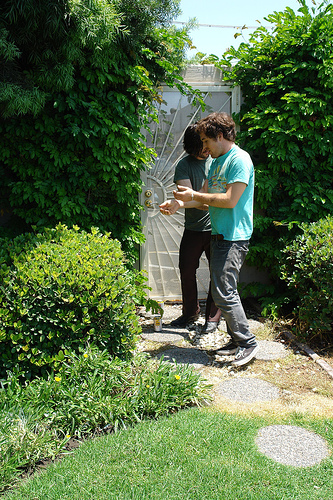}\end{center}
    & 
    \begin{itemize}
        \item Two young guys with shaggy hair look at their hands while hanging out in the yard.
        \item White males are outside near many bushes.
        \item Two men in green shirts are standing in a yard. 
        \item A man in a blue shirt standing in a garden.
        \item Two friends enjoy time spent together.
      \end{itemize}
    &
    \begin{itemize}
        \item standing
        \item people
        \item talking
        \item blue shirt
        \item green
    \end{itemize}
    \\
\hline
    \begin{center}\includegraphics[width=.9\linewidth, height=15mm]{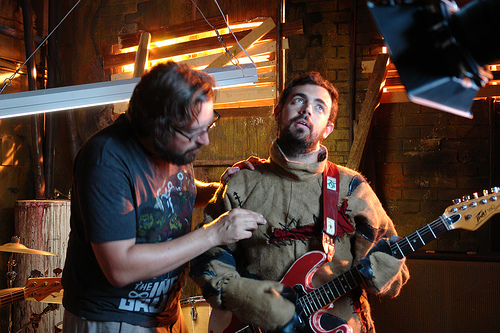}\end{center}
    & 
    \begin{itemize}
        \item Two people in the photo are playing the guitar and the other is poking at him.
        \item A man in green holds a guitar while the other man observes his shirt.
        \item A man is fixing the guitar players costume. 
        \item A guy stitching up another man 's coat .
        \item The two boys playing guitar.
      \end{itemize}
    &
    \begin{itemize}
        \item players
        \item guitar
        \item stand
        \item young
    \end{itemize}
    \\
\hline
    \begin{center}\includegraphics[width=.9\linewidth, height=15mm]{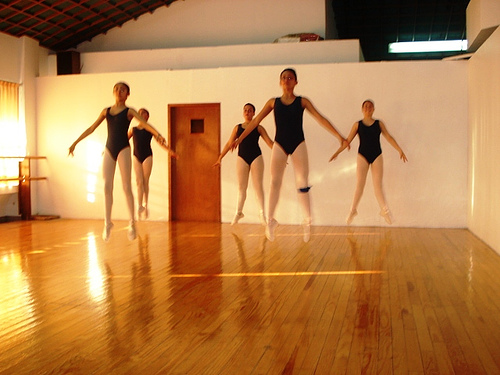}\end{center}
    & 
    \begin{itemize}
        \item Five ballet dancers caught mid jump in a dancing studio with sunlight coming through a window.
        \item Ballet dancers in a studio practice jumping with wonderful form.
        \item Five girls are leaping simultaneously in a dance practice room. 
        \item Five girls dancing and bending feet in ballet class .
        \item A ballet class of five girls jumping in sequence.
      \end{itemize}
    &
    \begin{itemize}
        \item playing
        \item girls
        \item lady
        \item orange
        \item yellow
    \end{itemize}
    \\
\hline
\end{tabular}
\caption{Image-to-tag task using PICE on the Flickr-$30$k dataset.}
\label{fig:image2tag}
\end{table*}

\begin{table*}[t!]
\centering
\begin{tabular}{| m{1.0cm} | m{2.5cm} | m{2.5cm} | m{2.5cm} | m{2.5cm} |}
\hline
\hfil Tag & \hfil Image Sample $1$ & \hfil Image Sample $2$ & \hfil Image Sample $3$ & \hfil Image Sample $4$ \\
\hline
    \hfil Snow
    & 
    \begin{center}\includegraphics[width=.9\linewidth, height=15mm]{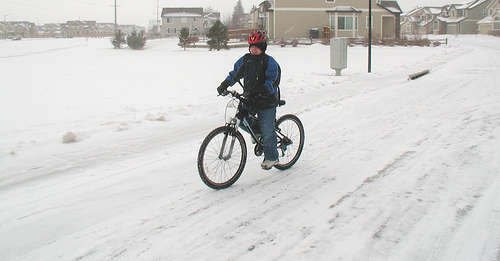}\end{center}
    &
    \begin{center}\includegraphics[width=.9\linewidth, height=15mm]{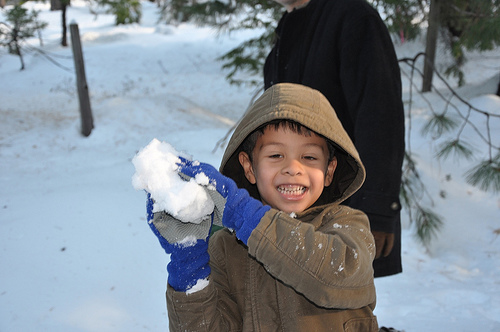}\end{center}
    &
    \begin{center}\includegraphics[width=.9\linewidth, height=15mm]{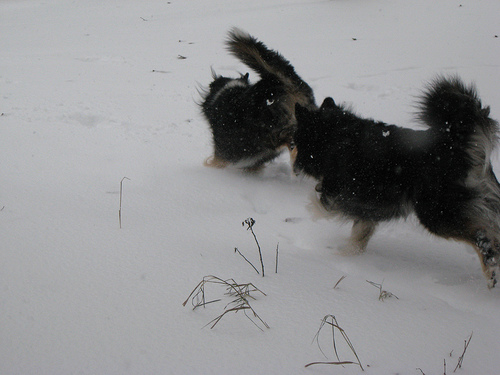}\end{center}
    &
    \begin{center}\includegraphics[width=.9\linewidth, height=15mm]{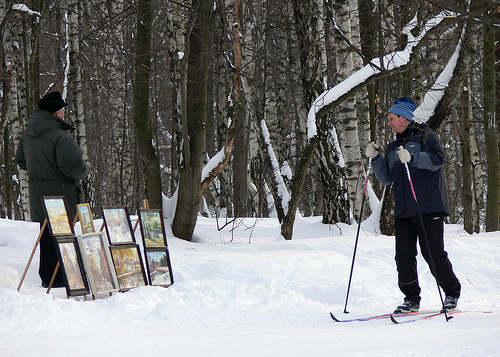}\end{center}
    \\
\hline
    \hfil Water
    & 
    \begin{center}\includegraphics[width=.9\linewidth, height=15mm]{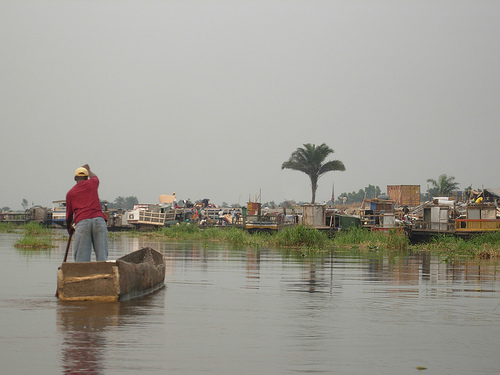}\end{center}
    &
    \begin{center}\includegraphics[width=.9\linewidth, height=15mm]{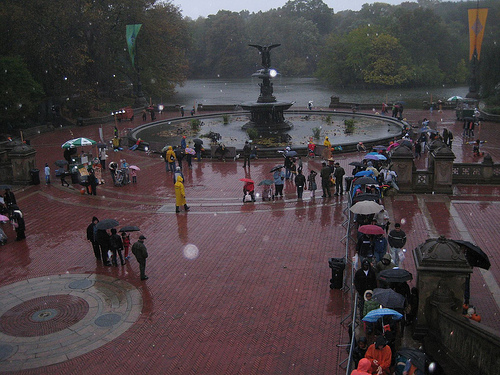}\end{center}
    &
    \begin{center}\includegraphics[width=.9\linewidth, height=15mm]{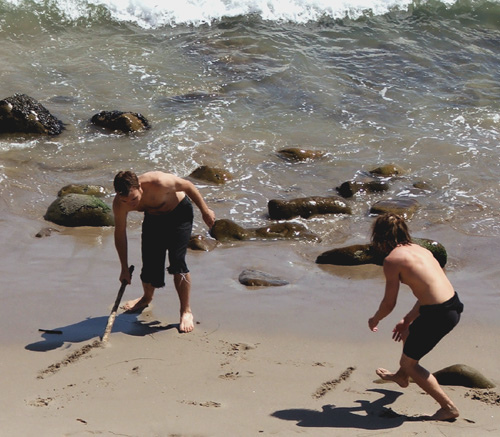}\end{center}
    &
    \begin{center}\includegraphics[width=.9\linewidth, height=15mm]{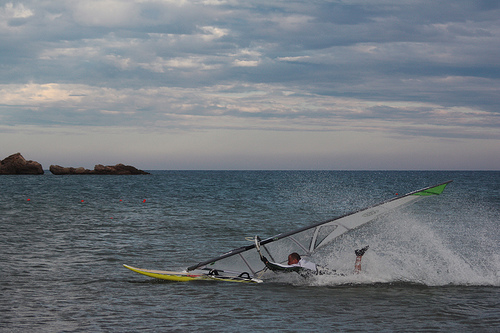}\end{center}
    \\
\hline
\end{tabular}
\caption{Tag-to-Image task using PICE on the Flickr-$30$k dataset.}
\label{fig:tag2image}
\end{table*}

\textbf{MNIST.}\;
We randomly select two images of the same digit as the two views $X$ and $Y$.
The PICE is trained for $50$k epochs on the training set, with a batch size of $2048$ for estimating the correlation/cross correlation matrix. 
It is reasonable to expect $10$ significant PICs in this case (due to $10$ underlying digits). 
As seen in Fig.~\ref{fig:mv}, there is a significant drop in the PIC values precisely at $10$, indicating that the dataset indeed has $10$ significant (nonlinear) components.

\textbf{Noisy MNIST.}\;
The dataset is the noisy version of the MNIST dataset, in the sense that for view 1 ($X$), each image is rotated at angles uniformly sampled from $[-\pi/4, \pi/4]$, and for view 2 ($Y$), each image is randomly rotated and random noise uniformly sampled from $[0, 1]$ is added.
The PICE is trained similarly as the experiment of MNIST. 
Again, in Fig.~\ref{fig:mv}, we observe a significant drop in the PICs values at $10$; however, due to the random notation and noise, the PICs are individually smaller than those from the experiment of MNIST.

\textbf{CIFAR-10.}\;
We sample view $1$ ($X$) and view $2$ ($Y$) by randomly selecting pairs from a randomly chosen category in each batch. 
Once again in Fig.~\ref{fig:mv}, we observe a drop of the PICs at around $13$, indicating $13$ directions that maximally preserve correlation after training. 
Note that since the CIFAR-10 dataset is more complicated than the (noisy) MNIST dataset, these directions do not necessarily correspond to the underlying labels (e.g., may distinguish between images with dark/light background as opposed to dog/cat).

\subsection{Multi-Modal Learning}
Multi-modal learning (see e.g., \cite{li2018survey, ngiam2011multimodal, srivastava2012multimodal}) is similar to the multi-view learning problem introduced above, with the key difference that $X$ and $Y$ have different \emph{modalities}, i.e., they come from different domains. 
For example, $X$ could be an image and $Y$ could be a piece of a caption (e.g., a sentence) describing the content of the image.
The main challenge in multi-modal learning is to find a set of common low-dimensional mappings such that the data from different modalities are maximally correlated, which is usually achieved by existing methods like CCA/KCCA \cite{gong2014multi}. 

The principal functions are a useful tool to perform this task.
We focus on the specific scenario where $X$ are images and $Y$ are captions.
When the principal functions $f_i$ and $g_i$ map the images and captions into a common latent space, given an image $x_k$, we are able to compute which group of keywords $y_k$ (extracted from the captions) is closest to $x_k$ in this low-dimensional space.
Given a radius $\tau > 0$, we define the set 
\begin{eqnarray}
\calY(x_k) = \{ y_k|\; \|\bff(x_k)- \mathbf{g}(y_k)\|_2 \leq \tau, \forall k = 1, \cdots, n \}.
\end{eqnarray}
The set $\calY(x_k)$ contains all the keywords that are maximally correlated to the image $x_k$, and can be used as the tag or annotation for the image.
In other words, the principal functions can automatically ``tag'' the images (i.e., the image-to-tag task \cite{jeon2003automatic}); moreover, this methodology is also useful when we want to annotate unlabeled images \cite{ando2005framework}.
Similarly, given a keyword $y_k$, we are able to construct a set $\calX(y_k)$ that contains the images closest to this keyword.
This tag-to-image task was recently applied to search images in a dataset given keywords \cite{gong2014multi}.

We implement the image-to-tag and tag-to-image discussed in Section~\ref{sec:multiview} using the PICE on the Flickr-$30$k image caption dataset \cite{plummer2015flickr30k}.
This dataset consists of $32$k high-resolution images obtained from the Flickr website, each of which has been annotated with five descriptive captions.
We resize all the images ($X$) to $64\times 64$ pixels for the purpose of training neural networks, and pre-process the captions to keep only meaningful terms (i.e., pieces of words) and use a bag-of-words model \cite{manning2010introduction} to tokenize each caption according to term frequency, ending up with $500$ words ($Y$).
We extract $50$ principal functions using the PICE, and report the top eights PICs in Table~\ref{tab:pic_mm}.
Note that the PICs are not as large as previous examples since all the captions are actually composed of $20586$ words; however, we truncate the words to $500$, losing some correlations between images and captions as a compromise for implementation.

For illustration, we randomly select $3$ test samples and find the $5$ words closest to these images in the latent space formed by the $50$ principal functions to illustrate the image-to-tag task in Table~\ref{fig:image2tag}.
We observe that the principal functions capture information that is not provided in the original captions but are related with the image. 
For example, in Table~\ref{fig:image2tag} row~$1$, the tags include ``talking'', which is an information not provided in the captions; yet another example is in Table~\ref{fig:image2tag} row~$3$, where the tags ``orange'' and ``yellow'' are not stated in the captions but are obviously describing the tone of the image.
Similarly, in Table~\ref{fig:tag2image}, we pick two tags, ``snow'' and ``water'', to perform the tag-to-image task with $\tau = 0.1$ and randomly select four images.
These examples show that the images in the latent space formed by the $50$ principal functions are indeed close to the tags.

\section{Conclusion}\label{sec:conclusion}
We generalize correspondence nalysis to an unprecedented scale using a theoretical framework for  analyzing and producing low-dimensional orthonormal mappings of pairs of dependent random variables that are maximally correlated called the PICs. 
This framework also extends classical statistical metrics and techniques such as maximal correlation and the ACE algorithm. 
Moreover, the decomposition of conditional distributions \eqref{eq:reconstitution_formula} in CA can be used to visualize classification boundary, as well as understand the evolution of models during training, and underlies recent multi-view/multi-modal learning methods such as DCCA.
We hope that this information-theoretic PIC framework for generalizing CA can be a useful tool in machine learning and data science and extended to different learning tasks.

\bibliographystyle{IEEEtran}
\bibliography{reference.bib}

\appendix
\section{Proof of Theorem~\ref{thm:CA_PIC}}\label{appendix:ca_pic}
If we write (\ref{eq:reconstitution_formula}) into matrix form, we have
\begin{eqnarray}\label{eq:pic_to_ca}
\mathbf{F} \mathbf{\Lambda}  \mathbf{G}^\intercal &=& \mathbf{D}_{X}^{-1}\mathbf{P}_{X,Y}\mathbf{D}_{Y}^{-1}-\mathbf{1}_{|\calX|}\mathbf{1}_{|\calY|}^\intercal\\
&=& \mathbf{D}_{X}^{-1}(\mathbf{P}_{X,Y}-\mathbf{p}_X\mathbf{p}_Y^\intercal)\mathbf{D}_{Y}^{-1}\\
&=& \mathbf{D}_{X}^{-1/2}\mathbf{Q}\mathbf{D}_{Y}^{-1/2}\\
&=& \mathbf{D}_{X}^{-1/2}\bU \bSigma \bV^\intercal\mathbf{D}_{Y}^{-1/2}\\
&=& \mathbf{L}\bSigma \mathbf{R}^\intercal,
\end{eqnarray}
where $[\mathbf{F}]_{i, j} = f_j(i)$, $[\mathbf{G}]_{i, j} = g_j(i)$ and $\mathbf{\Lambda} = \mathsf{diag}(\lambda_0, \cdots, \lambda_d)$.
Eq.~(\ref{eq:pic_to_ca}) shows that in discrete case, the principal functions $\mathbf{F}$ and $\mathbf{G}$ are equivalent to the orthogonal factors $\mathbf{L}$ and $\mathbf{R}$ in the CA, and the factoring scores $\bSigma$ are the same as the PICs $\mathbf{\Lambda}$. 
\section{Proof of Theorem~\ref{thm:opti}}\label{appendix:opti}
Since $\mathbb{E}[\|\bA\mathbf{\tilde{f}}(X)-\mathbf{\tilde{g}}(Y)\|^2_2] = \text{tr} \left( \bA\mathbb{E}[ \mathbf{\tilde{f}}(X)\mathbf{\tilde{f}}(X)^\intercal ]\bA^\intercal \right) - 2\text{tr} \left( \bA\mathbb{E}[ \mathbf{\tilde{f}}(X)\mathbf{\tilde{g}}(Y)^\intercal] \right) + \left( \mathbb{E}[\|\mathbf{\tilde{g}}(Y)\|^2_2] \right)$, we have 
\begin{equation}
    \mathbb{E}[\|\bA\mathbf{\tilde{f}}(X)-\mathbf{\tilde{g}}(Y)\|^2_2] = d - 2\text{tr} \left( \bA \bC_{fg} \right) + \mathbb{E}[\|\mathbf{\tilde{g}}(Y)\|^2_2],
\end{equation}
where the last equation comes from the fact that $\text{tr} \left( \bA\mathbb{E}[ \mathbf{\tilde{f}}(X)\mathbf{\tilde{f}}(X)^\intercal ]\bA^\intercal \right) = \text{tr} \left( \mathbf{I}_d \right) = d$. Since $\bC_f$ is positive-definite, $C_f^{-\frac{1}{2}}$ exists, and so does $\bA = \Tilde{\bA}\bC_f^{-\frac{1}{2}}$, and (\ref{opti2}) can be alternatively expressed as
\begin{equation}\label{opti3}
\begin{aligned}
\min\limits_{\Tilde{\bA}\in \Reals^{d \times d}}\ -2\text{tr}(\Tilde{\bA}\bB) + \mathbb{E}[\|\mathbf{\tilde{g}}(Y)\|^2_2],\
\text{subject to}\ \Tilde{\bA}\Tilde{\bA}^\intercal = \mathbf{I}_d,
\end{aligned}
\end{equation}
where $\bB = \bC_f^{-\frac{1}{2}}\bC_{fg}$.
The term $\text{tr}(\Tilde{\bA}\bB)$ can be upper bounded by the von Neumann's trace inequality \cite{mirsky1975trace},
\begin{equation}
    \text{tr}(\Tilde{\bA}\bB) \leq \sum_{i=1}^d \sigma_{\Tilde{\bA}, i}\sigma_{\bB, i},
\end{equation}
where $\sigma_{\Tilde{\bA}, i}$'s and $\sigma_{\bB, i}$'s are the singular values for $\tilde{\bA}$ and $\bB$ respectively. Moreover, the upper bounded can be achieved by solving the orthogonal Procrustes problem \cite{gower2004procrustes}, and the optimizer is $\Tilde{\bA}^* = \bV\bU^\intercal$, where $\bV$ and $\bU$ are given by the SVD of $\bB = \bU\mathbf{\Sigma}_\bB \bV^\intercal$.
Therefore, 
\begin{equation}
    \text{tr}(\Tilde{\bA}^*\bB) = \text{tr}(\bV\bU^\intercal \bU\mathbf{\Sigma}_\bB \bV^\intercal) = \sum_{i=1}^d \sigma_{\bB, i}
\end{equation}
which is the $d$-th Ky-Fan norm of $\bB$. The desired result then follows by simple substitution.
\section{The Whitening Process in Section~\ref{sec:implementation}}\label{appendix::algo}
{
\begin{algorithm}[H] 
\caption{Recovering $\bF_n(\bx_n)$ and $\bG_n(\by_n)$ from $\tilde{\bF}_n(\bx_n)$ and $\tilde{\bG}_n(\by_n)$, the output of the PICE.}\label{algo:whitening}
\begin{algorithmic}[1] 
\Input $\tilde{\bF}_n(\bx_n)$ and $\tilde{\bG}_n(\by_n)$
\Output Principal functions $\bF_n(\bx_n)$ and $\bG_n(\by_n)$
\State $\tilde{\bF}_n(\bx_n) \gets \tilde{\bF}_n(\bx_n) - \EE{\tilde{\bF}_n(\bx_n)}$,
\Statex $\tilde{\bG}_n(\by_n) \gets \tilde{\bG}_n(\by_n) - \EE{\tilde{\bG}_n(\by_n)}$ \Comment{(Remove mean)}
\State $\bU_f, S_f, \bV_f \gets$ SVD of $\frac{1}{n}\tilde{\bF}_n(\bx_n) \tilde{\bF}_n(\bx_n)^\intercal$,
\Statex $\bU_g, S_g, \bV_g\gets$ SVD of $\frac{1}{n}\tilde{\bG}_n(\by_n) \tilde{\bG}_n(\by_n)^\intercal$
\State $\bC_f^{-1/2} \gets \bU_f S_f^{-1/2} \bV_f^\intercal$,
\Statex $\bC_g^{-1/2} \gets \bU_g S_g^{-1/2} \bV_g^\intercal$ \Comment{(Find inverse)}
\State $\bL = \frac{1}{n} (\bC_f^{-1/2}\tilde{\bF}_n(\bx_n)) (\bC_g^{-1/2}\tilde{\bG}_n(\bx_n))^\intercal$ 
\State $\bU, S, \bV \gets$ SVD of $\bL$ \Comment{(Find singular vectors)}
\State $\bA = \bU^\intercal\bC_f^{-1/2} $, $\bB = \bV^\intercal\bC_g^{-1/2}$
\State \Return $\bA\tilde{\bF}_n(\bx_n)$, $\bB\tilde{\bG}_n(\by_n)$
\end{algorithmic}
\end{algorithm}
}
\section{Experimental Details and Setup}\label{appendix:exp}
Source code for reproducing our experimental results is given at \url{https://github.com/HsiangHsu/CorrelatedEmbeddings}, and Table~\ref{tab:parameters} summarizes all the hyper-parameters used in the experiments.
We note that a discussion on the dependence of the results on model size is included in \cite[Section~A.7]{hsu2019correspondence}.

\begin{table*}[!tbh]
\begin{center}
\small
 \begin{tabular}{||c c c c c c||} 
 \hline
 Experiments & \makecell{Neural Network\\ Architecture} & \makecell{Activation\\ Function} & \makecell{Learning\\ Rate} & \makecell{Number of\\ Training Epochs} & \makecell{Batch\\ Size} \\ [0.5ex] 
 \hline\hline
 \makecell{BSC\\(Section~\ref{sec:bsc})} & \makecell{F-Net: 30-30-25-12\\G-Net: 30-30-25-12} & $\tanh$ & $1\times 10^{-2}$ & $10$k & $1.5$k \\ 
 \hline
 \makecell{Gaussian\\(Section~\ref{sec:gaussian})} & \makecell{F-Net: 30-30-4\\G-Net: 30-30-4} & $\tanh$ & $5\times 10^{-3}$ & $8$k & $5$k \\
 \hline
 \makecell{Kaggle\\(Section~\ref{sec:exp_kaggle})} & \makecell{F-Net: 30-30-30-10\\G-Net: 30-30-30-10} & $\tanh$ & $5\times 10^{-3}$ & $20$k & $39774$ \\
 \hline
 \makecell{UCI Wine\\(Section~\ref{sec:exp_wine})} & \makecell{F-Net: 500-100-30-6\\G-Net: 10-5-3-6} & $\tanh$ & $1\times 10^{-3}$ & $1$k & $1599$ \\
 \hline
  \makecell{Decision Boundary\\CIFAR-10\\(Section~\ref{sec:boundary_visualization})} & CNN: VGG-16 & ReLU & $1\times 10^{-3}$ & $3$ & $128$ \\
 \hline
 \makecell{Training Process\\ MNIST\\(Section~\ref{sec:exp_training_process})} & \makecell{CNN\\ Convolution:\\ $[5, 5, 1, 32], 1$\\ Convolution:\\ $[5, 5, 32, 64], 1$\\ Fully Connected:\\ $[7*7*64, 1024]$\\Fully Connected:\\ $[1024, 10]$} & ReLU & $5\times 10^{-3}$ & $1$ & $125$ \\
 \hline
 \makecell{Training Process\\CIFAR-10\\(Section~\ref{sec:exp_training_process})} & CNN: VGG-16 & ReLU & $5\times 10^{-4}$ & $6$ & $500$ \\
 \hline
 \makecell{Multi-View\\MNIST\\(Section~\ref{sec:mv_mm})} & \makecell{F-Net: VGG-16\\G-Net: VGG-16} & ReLU & $1\times 10^{-2}$ & $50$k & $2048$ \\
 \hline
 \makecell{Multi-View\\Noisy MNIST\\(Section~\ref{sec:mv_mm})} & \makecell{F-Net: VGG-16\\G-Net: VGG-16} & ReLU & $1\times 10^{-2}$ & $50$k & $2048$ \\
 \hline
 \makecell{Multi-View\\CIFAR-10\\(Section~\ref{sec:mv_mm})} & \makecell{F-Net: VGG-16\\G-Net: VGG-16} & ReLU & $1\times 10^{-2}$ & $30$k & $256$ \\
 \hline
 \makecell{Multi-Model\\Flickr\\(Section~\ref{sec:mv_mm})}  & \makecell{F-Net: VGG-16\\G-Net: 100-64-32-50} & ReLU & $5\times 10^{-5}$ & $30$k & $200$ \\ [1ex] 
 \hline
\end{tabular}
\end{center}
\caption{Neural network structures used in experiments in Section~\ref{sec:validation},~\ref{sec:gca},~\ref{sec:reconstruct_pics}, and~\ref{sec:mv_mm}. 30-30-4 means the neural network consists of three layers and 30 neurons in the first layer and so on.}
\label{tab:parameters}
\end{table*}

\end{document}